\definecolor{gatr-green}{HTML}{419108}
\renewcommand*{\backrefalt}[4]{%
    \ifcase #1 \footnotesize{(Not cited.)}%
    \or        \footnotesize{(Cited on page~#2)}%
    \else      \footnotesize{(Cited on pages~#2)}%
    \fi}
\newtheorem{theorem}{Theorem}
\newtheorem{prop}[theorem]{Proposition}
\newtheorem{conjecture}[theorem]{Conjecture}
\newcommand{\R}{\mathbb{R}}
\newcommand{\alg}{\mathcal{G}}
\newcommand{\Pin}{\mathrm{Pin}}
\newcommand{\Spin}{\mathrm{Spin}}
\newcommand{\orth}{\mathrm{O}}
\newcommand{\so}{\mathrm{SO}}
\newcommand{\euc}{\mathrm{E}}
\newcommand{\seuc}{\mathrm{SE}}
\newcommand{\vecop}{\mathrm{vec}}
\newcommand{\spinalg}{\mathfrak{spin}}
\newcommand{\spanop}{\mathrm{span}}
\newcommand{\onegator}{\includegraphics[scale=0.5]{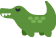}}
\newcommand{\twogators}{\onegator\!\onegator}
\newcommand{\threegators}{\onegator\!\onegator\!\onegator}
\newenvironment{proofsketch}{\proof}{\endproof}
\begin{document}

%
\runningtitle{Euclidean, Projective, Conformal: Choosing a Geometric Algebra for Equivariant Transformers}

%

\twocolumn[

\aistatstitle{Euclidean, Projective, Conformal: \\ Choosing a Geometric Algebra for Equivariant Transformers}

\aistatsauthor{Pim de Haan \And Taco Cohen \And Johann Brehmer}

\aistatsaddress{ Qualcomm AI Research\footnotemark[1] } ]

\begin{abstract}
  The Geometric Algebra Transformer (GATr) is a versatile architecture for geometric deep learning based on projective geometric algebra. We generalize this architecture into a blueprint that allows one to construct a scalable transformer architecture given \emph{any} geometric (or Clifford) algebra. We study versions of this architecture for Euclidean, projective, and conformal algebras, all of which are suited to represent 3D data, and evaluate them in theory and practice. The simplest Euclidean architecture is computationally cheap, but has a smaller symmetry group and is not as sample-efficient, while the projective model is not sufficiently expressive. Both the conformal algebra and an improved version of the projective algebra define powerful, performant architectures.
\end{abstract}
\footnotetext[1]{Qualcomm AI Research is an initiative of Qualcomm Technologies, Inc.}
\section{INTRODUCTION}
\label{sec:intro}
Geometric problems require geometric solutions, such as those developed under the umbrella of geometric deep learning~\citep{bronstein2021geometric}.
The primary design principle of this field is equivariance to symmetry groups~\citep{cohen2016group}: network outputs should transform consistently under symmetry transformations of the inputs.
This idea has sparked architectures successfully deployed to problems from molecular modelling to robotics.

In parallel to the development of modern geometric deep learning, the transformer \citep{NIPS2017_3f5ee243} rose to become the de-facto standard architecture across a wide range of domains.
Transformers are expressive, versatile, and exhibit stable training dynamics.
Crucially, they scale well to large systems, mostly thanks to the computation of pairwise interactions through a plain dot product and the existence of highly optimized implementations~\citep{rabe2021self,
dao2022flashattention}.

Only recently have these two threads been woven together.
While different equivariant transformer architectures have been proposed~\citep{Fuchs2020-bw, jumper2021highly, liao2022equiformer}, most involve expensive pairwise interactions that either require restricted receptive fields or limit the scalability to large systems.
\citet{gatr} introduced the Geometric Algebra Transformer (GATr), to the best of our knowledge the first equivariant transformer architecture based purely on dot-product attention. The key enabling idea is the representation of data in the projective geometric algebra. This algebra supports the embedding of various kinds of 3D data and has an expressive invariant inner product.

In this paper, we generalize the GATr architecture to arbitrary geometric (or Clifford) algebras. Given any such algebra, we show how to construct a scalable, equivariant transformer architecture. We focus on the Euclidean and conformal geometric algebra in addition to the projective algebra used by \citet{gatr} and discuss how all three can represent 3D data.

We compare GATr variations based on these three algebras. Theoretically, we study their expressivity, the representation of 3D positions, and the ability to compute attention based on Euclidean distances. In experiments, we compare the architectures on $n$-body modelling tasks and the prediction of wall shear stress on large artery meshes. We also comment on normalization and training stability issues.

All variations of the GATr architecure prove viable, with unique strengths. While the Euclidean architecture is the simplest and most memory-efficient, it has a smaller symmetry group and is less sample-efficient. In its simplest form, the projective algebra is not expressive enough, but it performs well in an improved, more complex version. While the conformal algebra makes normalization more challenging, it offers an elegant formulation of 3D geometry and strong experimental results.


\begin{figure*}[h]
    \centering
    \includegraphics[width=1.5\columnwidth]{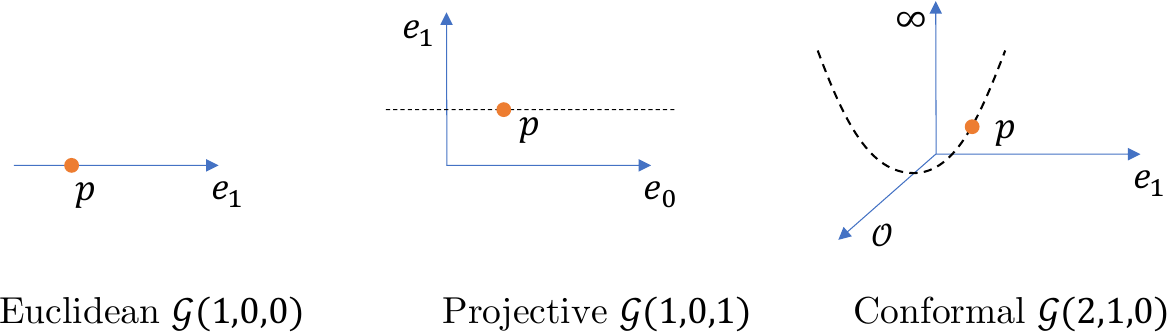}
    \caption{The representations of points in the EGA, PGA and CGA, shown with one spatial dimension for visualization clarity. The dashed lines shows the possible coordinates of points.}
    \label{fig:point-representation}
\end{figure*}

\section{GEOMETRIC ALGEBRAS}
\label{sec:algebras}

\paragraph{Geometric algebra}

We start with a brief introduction to geometric algebra (GA).
An algebra is a vector space that is equipped with an associative bilinear product $V \times V \to V$.

Given a vector space $V$ with a symmetric bilinear inner product, the geometric or Clifford algebra $\alg(V)$ can be constructed in the following way: choose an orthogonal basis $e_i$ of the original $d$-dimensional vector space $V$. Then, the algebra has $2^d$ dimensions with a basis given by elements $e_{j_1}e_{j_2}...e_{j_k}=:e_{j_1j_2...j_k}$, with $1 \le j_1 < j_2 < ... < j_k \le d$, $0 \le k \le d$.
For example, for $V=\R^3$, with orthonormal basis $e_1,e_2,e_3$, a basis for the algebra $\alg(\R^3)$ is
\begin{equation}
1, e_1, e_2, e_3, e_{12}, e_{13}, e_{23}, e_{123}.
\label{eq:euc-ga-basis}
\end{equation}
An algebra element spanned by basis elements with $k$ indices is called a $k$-vector or a vector of \emph{grade} $k$. A generic element whose basis elements can have varying grades is called a \emph{multivector}. A multivector $x$ can be projected to a $k$-vector with the grade projection $\langle x\rangle_k$.

The product on the algebra, called the geometric product, is defined to satisfy $e_ie_j=-e_je_i$ if $i\neq j$ and $e_ie_i=\langle e_i, e_i\rangle$, which by bilinearity and associativity fully specifies the algebra. As an example, for $\alg(\R^3)$, we can work out the following product
\begin{multline}
e_{23}e_{12}=(e_2e_3)(e_1e_2)=(-e_3e_2)(-e_2e_1) \\
=e_3(e_2e_2)e_1
=e_3\langle e_2,e_2 \rangle e_1 = e_3e_1=-e_1e_3=-e_{13}.
\end{multline}

GAs are equipped with a linear bijection $\widehat{e_{j_1j_2...j_k}}=(-1)^k e_{j_1j_2...j_k}$, called the grade involution, a linear bijection $\widetilde{e_{j_1j_2...j_k}}=e_{j_k...j_2j_1}$, called the reversal, an inner product $\langle x, y \rangle=\langle x \widetilde y \rangle_0$, and an inverse $x^{-1}=\widetilde x / \langle x, x \rangle$, defined if the denominator is nonzero.
A group element $u \in \Pin(V)$ acts on an algebra element $x \in \alg(V)$ by (twisted) conjugation: $u[x]=u x u^{-1}$ if $u \in \Spin(V)$ and $u[x]=u \widehat{x}u^{-1}$ otherwise. This action is linear, making $\alg(V)$ a representation of $\Pin(V)$.
From the geometric product, another associative bilinear product can be defined, the wedge product $\wedge$. For $k$-vector $x$ and $l$-vector $y$, this is defined as $x \wedge y=\langle x y \rangle_{k+l}$.


All real inner product spaces are equivalent to a space of the form $\R^{p,q,r}$, with an orthogonal basis with $p$ basis elements that square to +1 ($\langle e_i,e_i\rangle=1$), $q$ that square to -1 and $r$ that square to 0. Similarly, all GAs are equivalent to an an algebra of the form $\alg(\R^{p,q,r})$.
We'll write $\alg(p,q,r):=\alg(\R^{p,q,r}), \Pin(p,q,r):=\Pin(\R^{p,q,r})$.

\paragraph{Geometric algebras for 3D space}

We consider three GAs to model three dimensional geometry. The first is $\alg(3,0,0)$, the \emph{Euclidean} GA (EGA), also known as \emph{Vector} GA. The $k$-vectors have as geometric interpretation respectively: scalar, vectors, pseudovectors, pseudoscalar. A unit vector $x$ in $\Pin(3,0,0)$ represents a mirroring through the plane normal to $x$. Combined reflections generate all orthogonal transformation, making the EGA a representation of $\orth(3)$, or of $\euc(3)$, invariant to translations.

To represent translation-variant quantities (e.g.\ positions), we can use $\alg(3,0,1)$, the \emph{projective} GA (PGA). Its base vector space $\R^{3,0,1}$ adds to the three Euclidean basis elements, the basis element $e_0$ which squares to 0 (``homogeneous coordinates''). A unit 1-vector in the PGA is written as $v=\mathbf n - \delta e_0$, for a Euclidean unit vector $\mathbf n \in \R^3$, and $\delta \in \R$, and represents a plane normal to $\mathbf n$, shifted $\delta$ from the origin. 
The 2-vectors represent lines and 3-vectors points~\citep{pga-tour}.
The group $\Pin(3,0,1)$ is generated by the unit vectors representing reflections through shifted planes, generating all of $\euc(3)$, including translations.

The final algebra we consider is $\alg(4,1,0)$, the \emph{conformal} GA (CGA, \citep{Dorst2009}). Its base vector space $\R^{4,0,1}$ adds to the three Euclidean basis elements $e_i$, the elements $e_+$ and $e_-$ which square to $+1$ and $-1$ respectively. Alternatively, it is convenient to choose a non-orthogonal basis $\infty = e_- - e_+$ and $o = (e_-+e_+)/2$, such that $\langle \infty, \infty \rangle = \langle o, o \rangle = 0$ and $\langle \infty, o \rangle = -1$.
Planes in the CGA are represented by a 1-vector $\mathbf n - \delta \infty$, for a Euclidean unit vector $\mathbf n$ and $\delta \in \R$.
The Euclidean group $\euc(3)$ is generated by all such planes, which form a subgroup of $\Pin(4,1,0)$.
The CGA contains a point representation by a null 1-vector $p=o + \mathbf p + \lVert \mathbf p \rVert^2 \infty/2$, for a Euclidean position vector $\mathbf p \in \R^3$.
The different ways in which points are represented in these three algebras are visualized in Figure~\ref{fig:point-representation}.

\section{THE GENERALIZED GEOMETRIC ALGEBRA TRANSFORMER}
\label{sec:gatr}
In this section, we first summarize the prior work, and then discuss how to generalize it from the PGA to the other algebras than model Euclidean geometry.

\subsection{The original Geometric Algebra Transformer}

Our work builds on the Geometric Algebra Transformer (GATr) architecture introduced by \citet{gatr}. This architecture is a transformer architecture~\citep{NIPS2017_3f5ee243} modified in two ways.
First, inputs, outputs, and hidden states consist not only of the usual scalar vector spaces, but also of multiple copies of the projective geometric algebra $\alg(3,0,1)$.
Second, all GATr layers are equivariant with respect to $\euc(3)$, the symmetry group of 3D space.

To satisfy these objectives, the authors construct the most general $\euc(3)$-equivariant linear maps $\alg(3,0,1) \to \alg(3,0,1)$ and modify the nonlinearities and normalization layers to equivariant counterparts. In the MLP, they let the inputs interact via the geometric product and another bilinear interaction, the join. In the attention mechanism, they compute an invariant attention weight between key $k$ and query $q$ -- more on this later.

\subsection{Generalizing GATr to arbitrary algebras}

We now generalize the GATr architecture from the projective algebra to include also the Euclidean and conformal algebras. 
Given a choice of algebra, the generalized GATr architecture uses (many copies of) the algebra as its feature space and is equivariant to $\euc(3)$ transformations.
We will refer to the resulting architecture for the EGA, PGA and CGA as \emph{E-GATr}, \emph{P-GATr}, and \emph{C-GATr}, respectively.

The general GATr construction involves the following modifications to a generic transformer:
\begin{enumerate}
    \item constrain the linear layers to be equivariant,
    \item switch normalization layers and nonlinearities to their equivariant counterparts,
    \item let the inputs to the MLP interact via the geometric product,
    \item compute an invariant attention weight between key $k$ and query $q$ via the algebra's inner product $\langle k, q \rangle$.
\end{enumerate}
We will discuss the construction of equivariant linear maps in Sec.~\ref{sec:equi_linear} and the choice of normalization layers in Sec.~\ref{sec:normalization}.

As discussed in Sec.~\ref{sec:algebras}, three algebras offer natural embeddings for 3D data: the Euclidean algebra $\alg(3,0,0)$, the projective algebra $\alg(3,0,1)$, and the conformal algebra $\alg(4,1,0)$.
We thus construct GATr variants based on these three algebras and refer to them as \emph{E-GATr}, \emph{P-GATr}, and \emph{C-GATr}, respectively.

The projective and conformal algebras are faithful representations of $\euc(3)$.
The Euclidean algebra, however, only transforms by the group $\orth(3)$ of rotations and mirroring. To make E-GATr $\euc(3)$ equivariant, we center inputs, for instance by moving the center of mass to the origin, and make the network $\orth(3)$-equivariant.

The GATr architecture introduced in \cite{gatr} was based on the PGA, but differs from P-GATr in two key ways: the MLP uses the join in addition to the geometric product; and in addition to PGA inner product attention, it uses a map from PGA 3-vectors representing points to CGA 1-vectors representing points and uses the CGA inner product on those. See \cite{gatr} for details.
We refer to this version as improved P-GATr (iP-GATr).

\subsection{Constructing equivariant maps}
\label{sec:equi_linear}
In any geometric algebra, for all $u \in \Pin(V), x, y \in \alg(V)$, we have that $u[xy]=u[x]u[y]$. Hence, the geometric product is equivariant. Also, any $k$-vector transforms into a $k$-vector by the action of $\Pin(V)$, making the grade projection equivariant. As the GA inner product results in a scalar, it is invariant. Furthermore, in the PGA, there is a $\euc(3)$-invariant non-scalar multivector $e_0$. Hence, multiplication with $e_0$ is also $\euc(3)$-equivariant. The EGA has no such invariant multivectors (other than 1).

In \cite{gatr}, it was proven that in the EGA and PGA, all $\euc(3)$-equivariant linear maps can be constructed from linear combinations of geometric product, grade projections and invariant multivectors. To generalize this to include the CGA, in this paper we use a numerical approach to finding equivariant maps.

Let $\alg(V)$ denote the EGA, PGA or CGA. As discussed above, we have an action of the Euclidean group $\euc(3)$ on the algebra, here denoted as a group representation $\rho$, so that for each $g \in \euc(3)$, $\rho(g): \alg(V) \to \alg(V)$ is a linear bijection, respecting the group multiplication structure of $\euc(3)$. Any map $\phi: \alg(V) \to \alg(V)$ is said to be equivariant if for any $g\in \euc(3)$, the following equation is satisfied:
\[
\rho(g) \circ \phi = \phi \circ \rho(g)
\]

If $\phi$ is a linear map, it is equivalently a vector $\vecop(\phi)$ in the vector space $\alg(V) \otimes \alg(V)^*$, where $\alg(V)^*$  is the dual vector space, the vector space of linear maps $\alg(V) \to \R$. This vector space is also equipped with a $\euc(3)$ action $\rho \otimes \rho^*$, where $\rho^*(g):=\rho(g^{-1})^T$ is the called the representation adjoint to $\rho$. Hence, the equivariance constraint of $\phi: \alg(V) \to \alg(V)$ is equivalent to invariance of $\vecop(\phi) \in \alg(V) \otimes \alg(V)^*$, thus satisfying for each $g \in \euc(3)$:
\begin{align}
&(\rho \otimes \rho^*)(g) \vecop(\phi) = \phi \nonumber\\
\iff &((\rho \otimes \rho^*)(g) - 1)\vecop(\phi) = 0
\label{eq:equi-invariance}
\end{align}

This constraint can be solved by sampling sufficiently many $g\in \euc(3)$, row-stacking the matrices $((\rho \otimes \rho^*)(g) - 1)$ and numerically computing its null-space. However, this may require impractically many sampler, and thus computational cost. Later, we'll discuss equivariant multilinear maps, for which this issue is even more pressing. A more efficient approach, as also discussed for generic Lie groups by \cite{Finzi2021-vj}, is to solve the constraint via the Lie algebra. Please see the Appendix for more details.

In the GAs we consider, for any rototranslation $g \in \seuc(3) \subseteq \Spin$, there is a bivector $X$ that represents an infinitesimal transformation, or Lie algebra element: $\exp(X)=g$, where we use the GA exponential that is defined through the Taylor series:
\[
\exp(x)=1 + x + \frac{1}{2!}x^2 + \frac{1}{3!}x^3 + ... 
\]

Filling in $g=\exp(X)$ and collecting linear terms, we find the action of the infinitesimal $X$ on the algebra, which is a Lie algebra representation $d\rho(X)$ sending $v \mapsto d\rho(X)(v) = Xv-vX$. Similarly, from the invariance constraint in Eq.~\eqref{eq:equi-invariance}, we can collect the linear terms in $X$ and obtain an infinitesimal equivariance constraint:
\begin{equation}
d(\rho \otimes \rho^*)(X) \vecop(\phi)=0
\label{eq:lie-alg-contraint}
\end{equation}
where $d(\rho \otimes \rho^*)(X)=d\rho(X) \otimes 1 - 1 \otimes d\rho^T(X)$. This constraint should be statisfied for all $X$ that generate $\seuc(3)$ in the algebra. For the EGA and PGA, these are all bivectors, and for the CGA these are the bivectors generated by $e_1,e_2,e_3,\infty$. The Lie algebra constraint Equation~\ref{eq:lie-alg-contraint} is linear in $X$, so we can require it just on the 6 dimensional basis of the $\seuc(3)$-generating bivectors. For full $\euc(3)$-equivariance including mirroring, we add one additional mirror constraint as in Equation~\ref{eq:equi-invariance}.

Applying this strategy to the EGA, PGA and CGA, and studying the resulting null-space, we find that the pattern found by \cite{gatr} generalizes to the CGA: all equivariant linear maps are linear combinations of grade projections and multiplication with invariant multivectors, which are $e_0$ for the PGA and $\infty$ for the CGA. Thus, for parameters $\alpha, \beta, \gamma, \delta$, these can be parameterized as follows.
For the EGA, we find
\[
\phi(x)=\sum_{k=0}^3 \alpha_k \langle x \rangle_k.
\]
For the PGA, we find
\[
\phi(x)=\sum_{k=0}^4 \alpha_k \langle x \rangle_k + \sum_{k=1}^4 \beta_k \langle e_0 x \rangle_k.
\]
Finally, for the CGA, we find
\begin{align*}
 \phi(x)=\quad
&\sum_{k=0}^5 \alpha_k \langle x \rangle_k \\
+ &\sum_{k=1}^5 \beta_k \langle \infty \langle x \rangle_k \rangle_{k-1} \\
+ &\sum_{k=0}^4 \gamma_k \langle \infty \langle x \rangle_k \rangle_{k+1} \\
+ &\sum_{k=1}^4 \delta_k \infty\langle \infty \langle x \rangle_k \rangle_{k-1}.
\end{align*}
The CGA equivariant linear maps have a different structure from the PGA maps, because when multiplying a $k$-vector in the PGA by $e_0$, one obtains a $k+1$-vector, because $e_0$ has an inner product of 0 with all other vectors. On the other hand, in the CGA, $\langle \infty, o \rangle=-1$, so multiplying a $k$-vector with $\infty$ results in a multivector with grades $k-1$ and $k+1$.
\subsection{Normalization layers}
\label{sec:normalization}

Transformers typically use layer normalization after or, more recently, before the self-attention mechanism and the MLP~\citep{xiong2020layer}.
GATr is no exception and proposes an equivariant modification of LayerNorm, which for $n$ multivector channels is given by
\[
\alg(p,q,r)^n \to \alg(p,q,r)^n : x \mapsto \frac{x}{\sqrt{\frac{1}{n} \sum_{i=1}^n \langle x^i, x^i \rangle + \epsilon}} \,.
\]
To ensure equivariance, this leaves out the shift to zero mean used typically in normalization.

This approach works when $q=r=0$, as then the inner product is directly related to the magnitude of the multivector coefficients, which the normalization layer is designed to keep controlled. However, for the PGA, with $r=1$, the 8 dimensions containing $e_0$ do not contribute to the inner product, making their magnitudes no longer well-controlled. We found that a reasonably high magnitude of $\epsilon=0.01$ suffices to stabilize training.

For the CGA, with $q=1$, the situation is worse. First, as the inner products can be negative, the channels can cancel each other out. In a first attempt to address this, we add the absolute value around the inner product:
\[
\alg(p,q,r)^n \to \alg(p,q,r)^n : x \mapsto \frac{x}{\sqrt{\frac{1}{n} \sum_{i=1}^n \lvert \langle x^i, x^i \rangle\rvert + \epsilon}}
\]
However, also within one multivector some dimension contribute negatively to the inner product and, for example, a scalar and pseudoscalar can cancel out to give a 0-norm (null) multivector. The coefficients of such a multivector grow by $1/\sqrt \epsilon$ with each normalization layer. Empirically, we found that setting $\epsilon=1$ stabilizes training, but harms model performance.

Instead, we found it beneficial to use the following norm in the CGA, which applies the absolute value around each multivector grade separately:
\begin{align*}
\alg(p,q,r)^n &\to \alg(p,q,r)^n \\
x &\mapsto \frac{x}{\sqrt{\frac{1}{n} \sum_{i=1}^n \sum_{k=0}^5 \lvert \langle \langle x^i \rangle_k, \langle x^i\rangle_k \rvert \rangle + \epsilon}}
\end{align*}
This approach mostly addressed stability concerns. However, due to the fact that we still cannot fully control the magnitude of the coefficients, we found it necessary to train C-GATr at 32-bit floating-point precision, whereas the other GATr variants trained well at 16-bit precision (\texttt{bfloat16}).

\section{THEORETICAL COMPARISON}
\label{sec:theory}
We analyze the GATr variants theoretically from three angles: their ability to epress any equivariant multilinear map, their ability to encode absolute positions and their ability to compute attention based on distances.

\subsection{Multilinear expressivity}

To understand better the trade-offs between the GATr variants, we'd like to understand whether they are universal approximators. We will study the slightly simpler question of whether the algebras can express any multilinear map $\alg(V)^l \to \alg(V)$, a map from $l$ multivectors to one multivector, linear in each of the inputs. 
First, we study the case of non-equivariant maps, proven in
the Appendix.

\begin{prop}
\label{thm:non-equi-multilinear}
    Let $l \ge 1$.
    \begin{enumerate}
        \item[(1)] If and only if the inner product of $\R^{p,q,r}$ is non-degenerate ($r=0$), any multilinear map $\alg(p,q,r)^l \to \alg(p,q,r)$ can be constructed from addition, geometric products, grade projections and constant multivectors.
    
        \item[(2)] Furthermore, any multilinear map $\alg(p,0,1)^l \to \alg(p,0,1)$ can be constructed from addition, geometric products, the join bilinear, grade projections and constant multivectors.
    \end{enumerate}
\end{prop}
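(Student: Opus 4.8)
The plan is to prove both parts through a single mechanism. Identify the space of multilinear maps $\alg(p,q,r)^l\to\alg(p,q,r)$ with $(\alg^*)^{\otimes l}\otimes\alg$ (write $\alg=\alg(p,q,r)$); its pure tensors $\psi_1\otimes\cdots\otimes\psi_l\otimes c$ correspond to the maps $(x_1,\dots,x_l)\mapsto\psi_1(x_1)\cdots\psi_l(x_l)\,c$ and span the whole space. Suppose $\tau$ is a \emph{non-degenerate} bilinear form on $\alg$ such that, for any constant $a\in\alg$, the grade-$0$ element $\langle \tau(x,a)\rangle:=$ ``$\tau(x,a)\cdot 1$'' is a constructible expression in $x$ within the relevant toolbox. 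Because $a\mapsto\tau(\cdot,a)$ is then a linear isomorphism $\alg\to\alg^*$, the maps
\[
(x_1,\dots,x_l)\ \longmapsto\ \Big(\textstyle\prod_{i=1}^l\tau(x_i,a_i)\Big)\,c,\qquad a_1,\dots,a_l,c\in\alg,
\]
range over all pure tensors, hence span all multilinear maps; and each such map is itself constructible, since it equals $c\cdot(\tau(x_1,a_1)1)\cdot(\tau(x_2,a_2)1)\cdots(\tau(x_l,a_l)1)$ and a geometric product of grade-$0$ elements is just the product of the corresponding scalars times $1$. So everything reduces to exhibiting a suitable $\tau$.

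For part~(1), the ``if'' direction, take $\tau(x,y)=\langle xy\rangle_0$: this is the grade-$0$ projection of a geometric product with a constant, hence constructible. On the basis one has $\langle e_Se_T\rangle_0=\pm\delta_{S,T}$ with $e_Se_S=\pm1$, and the sign is never $0$ precisely because $e_i^2\neq0$ for every generator, i.e.\ $r=0$; so $\tau$ is non-degenerate and the mechanism applies. For the ``only if'' direction it suffices to show that when $r\ge1$ not every multilinear map is constructible. Let $R\subseteq\alg$ be the two-sided ideal generated by the null basis vectors; explicitly $R=\spanop\{e_S:S\text{ contains a null index}\}$. Grade projections send $R$ into $R$ (projecting to grade $k$ merely keeps a subset of the original blades, each still containing a null index), and $R$ is an ideal, so geometric products, grade projections and constants all descend to $\alg/R$. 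Thus any constructible multilinear $f$ of degree $l\ge1$ descends to a multilinear $\bar f$ on $\alg/R$, so $f(x_1,\dots,x_l)\in R$ whenever $x_1\in R$. But $(x_1,\dots,x_l)\mapsto\big(\prod_i c_{e_0}(x_i)\big)\cdot1$, where $c_{e_0}(x_i)$ is the coefficient of the basis blade $e_0$ in $x_i$, is multilinear and sends $(e_0,\dots,e_0)$ to $1\notin R$; hence it is not constructible. Taking contrapositives finishes~(1).

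For part~(2) ($p$ arbitrary, $q=0$, $r=1$) the form $\langle xy\rangle_0$ is now degenerate — it annihilates $R$ — so we instead use the join: set $\sigma(x,y):=\langle x\vee y\rangle_0$, again a grade-$0$ projection of an operation allowed in this part, hence constructible in the sense of the mechanism. The key lemma is that $\sigma$ is non-degenerate. This holds because the join is, up to a fixed linear isomorphism (the metric-independent / Poincar\'e dual), the exterior product on $\alg(p,0,1)\cong\Lambda\R^{p+1}$; under it $\sigma$ becomes the top-degree ``determinant'' pairing $(a,b)\mapsto\langle a\wedge b\rangle_{p+1}$, which on basis blades sends $e_S\mapsto\pm e_{S^{c}}$ and is therefore non-degenerate (Poincar\'e duality of the exterior algebra). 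Feeding $\sigma$ into the mechanism yields all multilinear maps, now built from joins, grade projections, geometric products (to multiply the resulting scalars onto $c$) and constants.

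The main obstacle is the non-degeneracy lemma for $\sigma$ in part~(2): one must match the paper's definition of the join to the statement that it is the wedge product in disguise and that its top-grade pairing is Poincar\'e duality — once this is established, the argument is a verbatim copy of the non-degenerate case. A secondary, purely technical point is the bookkeeping in the ``only if'' part of~(1): the structural induction verifying that every constructible multilinear expression really does descend to $\alg/R$ and hence annihilates $R$ in its first argument. This is routine given that grade projections preserve $R$, but it is where one must be slightly careful about what counts as a constructible expression.
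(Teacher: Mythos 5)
Your proof is correct, and for the positive directions it is essentially the paper's argument in slightly abstracted form: both reduce everything to exhibiting a constructible non-degenerate pairing and using it to extract blade coefficients --- $\langle x\,e_{\mathbf i}\rangle_0$ (the paper inserts a reversal, which only changes signs) when $r=0$, and a join-with-complement pairing for $\alg(p,0,1)$. Your $\sigma(x,y)=\langle x\vee y\rangle_0$ evaluated against complementary blades is a mild simplification of the paper's $\langle(w\wedge e_{\setminus\mathbf i})\vee 1\rangle_0$, and your non-degeneracy lemma for $\sigma$ checks out directly against the paper's definition of the join: $\langle e_{\mathbf i}\vee e_{\mathbf j}\rangle_0=\pm\delta_{\mathbf j,\mathbf i^c}$. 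Where you genuinely depart from the paper is the ``only if'' direction of (1). The paper uses the same counterexample map (send $(e_0,\dots,e_0)$ to $1$) but justifies non-constructibility with the informal remark that ``the only way to annihilate $e_0$ is multiplication by $e_0$, which gives $0$.'' Your argument --- that $R=\spanop\{e_S: S\text{ contains a null index}\}$ is a two-sided ideal preserved by grade projection, so every constructible expression descends to $\alg/R$ and hence any constructible multilinear map sends $R$ in each slot into $R$ --- is a rigorous structural version of that remark, and it is the right way to close the gap; the only bookkeeping you flag (the induction over expression trees, and the observation that a multilinear $f$ with $f(0,\cdot)=0$ forces $f(e_0,\dots)\in R$) is indeed routine. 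So: same construction for sufficiency, a cleaner and more airtight obstruction for necessity.
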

\begin{proofsketch}
    For a non-degenerate geometric algebra $\alg(V)$, the GA inner product is a non-degenerate inner product.
    Hence, after picking a basis $b_i$ of the algebra, any linear map $\phi: \alg(V) \to \alg(V)$ can be written as:
    \[
    \phi(x)=\sum_{ij} \alpha_{ij}b_i \langle x, b_j \rangle
    \]
    for coefficients $\alpha_{ij}\in\R$. This argument easily generalizes from linear maps to multilinear maps.

    On the other hand, if the algebra is degenerate, let $e_0$ denote an orthogonal basis vector that squares to 0. Then consider map $\phi: \alg(V) \to \alg(V)$ sending $e_0$ to the scalar 1, and all orthogonal multivectors to 0. This map can not be expressed by the algebra, as the only way to annihilate the $e_0$ is multiplication by $e_0$, which results in 0, not 1.

    However, in the algebra $\alg(p, 0, 1)$ with the join, any multivector can be outer multiplied into the pseudoscalar, which becomes a scalar when joined with 1, from which any multivector can be constructed.
\end{proofsketch}

Thus, the EGA and CGA can express any non-equivariant non-linear map with just the geometric product as bilinear operation, while the PGA requires the join.

For GATr, we are primarily interested in equivariant maps. Here, we don't have a theoretical result, but a conjecture.
\begin{conjecture}
    Let $l \ge 2$.
    For the EGA and the CGA, and not for the PGA, any $\euc(3)$-equivariant (resp. $\seuc(3)$-equivariant) multilinear map $\alg(p,q,r)^l \to \alg(p,q,r)$ can be constructed out of a combination of the geometric product, grade projection and invariant multivectors. For PGA, any $\seuc(3)$-equivariant multilinear map can be expressed when additionally using the join.
\end{conjecture}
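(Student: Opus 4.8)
The plan is to translate the statement into the invariant theory of $\euc(3)$ and reduce it, as far as possible, to reductive groups. For the EGA the acting group is (after centering) just $\orth(3)$ and the inner product on $\alg(3,0,0)$ is non-degenerate, so an equivariant multilinear map $\alg(3,0,0)^l\to\alg(3,0,0)$ is exactly an $\orth(3)$-invariant tensor in $\alg(3,0,0)^{\otimes(l+1)}$, i.e.\ an invariant of a direct sum of exterior powers of the standard representation. By the first fundamental theorem for $\orth(3)$, every such invariant is built from metric contractions and the volume form $\epsilon$. The EGA case then reduces to the algebraic facts that a single metric contraction between two multivector slots is one geometric product followed by a grade projection, and that $\epsilon$ is multiplication by the pseudoscalar $e_{123}$, which is $\so(3)$-invariant and hence an allowed invariant multivector in the $\seuc(3)$ version, while it changes sign under a reflection and is correctly absent from the $\euc(3)$ version (where $l\ge 2$ inputs can nonetheless generate it through products). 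Iterating contractions builds all invariant tensors; this is the case I would complete first.

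For the PGA and CGA the obstruction is that $\euc(3)=\orth(3)\ltimes\R^3$ is \emph{non-reductive}, so $\alg(V)^{\otimes l}$ is not a sum of irreducibles and multiplicity counting fails. I would first describe the larger space of merely $\orth(3)$-equivariant maps: as $\orth(3)$-representations, $\alg(3,0,1)$ and $\alg(4,1,0)$ are again direct sums of exterior-power representations, so by the first fundamental theorem these maps are spanned by metric contractions (each realised as above by a geometric product and a grade projection) together with insertions of $\orth(3)$-fixed vectors — a multiple of $e_0$ for the PGA, and $e_+,e_-$ (equivalently $\infty,o$) for the CGA. Imposing translation-equivariance then amounts to requiring $\phi$ to be annihilated by the nilpotent translation generator, the conjugation-differential by $\tfrac12\,\mathbf t\wedge w$ with $w=e_0$ resp.\ $w=\infty$; this removes insertions of the non-translation-invariant vectors (only the combination $\infty$ survives in the CGA) and couples the surviving contractions with $w$-insertions. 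The conjecture asserts that the kernel of this nilpotent operator on the explicitly known space of $\orth(3)$-equivariant maps is precisely the span of composites of geometric products, grade projections and multiplications by $w$, all of which visibly lie in that kernel because they are already $\euc(3)$-equivariant. I would prove the reverse inclusion by induction on the arity $l$ together with the filtration of $\alg(V)$ by powers of $w$, peeling off leading terms, with the $l=1$ result of \cite{gatr} and the numerics behind the conjecture as base cases. For the CGA there is an alternative that bypasses the non-reductive analysis: since $\euc(3)\subset\orth(4,1)$, the conformal group, one can describe the $\orth(4,1)$-equivariant maps by the first fundamental theorem and add back only the single extra $\euc(3)$-invariant, $\infty$.

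The negative half — that the join is genuinely needed for the PGA — I would argue as in Proposition~\ref{thm:non-equi-multilinear}. Since $e_0^2=0$, the algebra $\alg(3,0,1)$ is not semisimple and its pseudoscalar is a zero divisor; geometric products, grade projections and multiplication by $e_0$ all respect a suitable $e_0$-valuation and cannot turn a multivector in the $e_0$-ideal into one with an invertible scalar part, whereas the join contracts against the full pseudoscalar and so ``sees through'' the degenerate direction. Exhibiting one explicit equivariant multilinear witness of this failure — for instance the join itself as a bilinear map, or the bilinear map dual to $x\wedge y$ — that cannot lie in the span then settles the PGA exclusion for all $l\ge 2$.

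The step I expect to be the real obstacle is the inductive closure for the PGA and CGA: controlling how grade projections interact with repeated multiplication by the invariant vector under a non-semisimple group action, and proving that no new equivariant operation appears in high arity that is not already a composite of low-arity ones (i.e.\ that the operad of allowed operations is generated in low arity). This is precisely the gap that keeps the statement a conjecture; absent a clean structural proof, the realistic fallback is to verify it computationally by applying the Lie-algebra null-space method of Section~\ref{sec:equi_linear} to multilinear maps for $l$ up to $3$ or $4$ and checking that the pattern has stabilised.
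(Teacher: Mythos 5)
This statement is a \emph{conjecture} in the paper: the authors offer no proof, only a numerical verification for $2 \le l \le 4$ obtained by computing the full space of equivariant multilinear maps via the Lie-algebra null-space method and comparing it (after an SVD reduction) against the span of maps built recursively from equivariant linears and the allowed bilinears (Appendix, ``Numerically testing expressivity''). Your concluding fallback is therefore exactly the paper's actual argument, and your proposed witness for the PGA exclusion (the join itself, blocked by an $e_0$-valuation argument) is the same mechanism as the ``$\Leftarrow$'' direction of Proposition~\ref{thm:non-equi-multilinear}; that negative half is sound.

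The theoretical scaffolding you add goes beyond the paper but does not close the gap, and it is worth being precise about where it fails. Even in the EGA case, which you describe as the one you ``would complete first,'' the first fundamental theorem for $\orth(3)$ only tells you that equivariant multilinear maps are spanned by complete metric-contraction patterns on $\bigoplus_k \Lambda^k\R^3$; the missing step is showing that an \emph{arbitrary} such pattern --- e.g.\ one in which the indices of a single $k$-vector slot are contracted against several different other slots --- is realizable as an iterated \emph{binary} geometric product interleaved with grade projections. A grade projection of a product of two multivectors only realizes the specific antisymmetrized contractions appearing in the grade decomposition of that product, so the claim that ``iterating contractions builds all invariant tensors'' in the required generators is precisely the combinatorial content of the conjecture, not a consequence of the FFT. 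Note also that Proposition~\ref{thm:non-equi-multilinear} does not help here: it expresses every multilinear map using arbitrary \emph{constant} multivectors, and averaging such an expression over $\orth(3)$ does not produce an expression in \emph{invariant} constants. For the PGA and CGA you correctly identify the non-reductivity of $\euc(3)$ and the need to control the kernel of the nilpotent translation generator on the $\orth(3)$-equivariant maps, but the inductive closure over arity and over the $w$-filtration is left entirely open --- as you yourself acknowledge. So the proposal is an honest research plan whose only completed component coincides with the paper's numerical check; it does not upgrade the conjecture to a theorem.
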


The approach of numerically finding linear maps as described in Section~\ref{sec:equi_linear} can be easily extended to multilinear maps, see the Appendix for details. Comparing those to maps constructable within the algebra, we were able to numerically verify the conjecture up to $l=4$.
These results suggest that the EGA and CGA, and PGA with the join are sufficiently expressive, while the PGA without the join is not.

\subsection{Absolute positions}
The PGA and CGA can represent the absolute position of points: multivectors that are invariant to exactly one rotational $\so(3)$ subgroup of $\seuc(3)$ -- rotations around that point. In contrast, the multivectors of the EGA are invariant to translations, so it can represent directions but not positions. A typical work-around, which we use in our EGA experiments, is to not use absolute positions, but positions relative to some special point, such as the center of mass of a point cloud, which are translation-invariant. This has as downside that the interactions between point-pairs depends on the center of mass. Alternatively, positions can be treated not as generic features in the network, but get special treatment, so that only the position difference between points is used. However, this design decision precludes using efficient dot-product attention in transformers \citep{gatr}.

One difference in the absolute point representations in the PGA and CGA, is that the PGA trivector represent oriented points that flip sign under a mirror. This makes it impossible to construct e.g. a mirror-invariant from a point cloud to $\R$. On the other hand, the CGA contains both oriented and unoriented points, so is able to construct mirror-invariant maps from point clouds.

\subsection{Distance-based attention}

It is desirable when using transformers with geometric systems, that the attention weights between objects can be modulated by their distance.
In GATr, the attention logits are the GA inner product between a key and query multivector.

Distance-based attention appears most naturally in the C-GATr architecture. In the CGA, a Euclidean position vector  $\mathbf{p} \in \R^3$ is represented as $p = o + \mathbf p + \lVert \mathbf p \rVert^2 \infty/2$, and inner products between points directly compute the Euclidean distance.
In the E-GATr, using the positions relative to a center of mass, the inner product of a query consisting of three multivectors $ (\lVert \mathbf{q} \rVert^2, 2\mathbf q, 1)$ and a key $(-1, \mathbf k, -\lVert \mathbf{k} \rVert^2)$ computes negative squared distance.

However, in P-GATr, dot-product attention cannot compute distances. We prove a stronger statement: any inner product must be constant in point coordinates. 

\begin{prop}
    Let $\omega: \R^3 \to \alg(3,0,1), \,x \mapsto x_1 e_{032} + x_2 e_{013} + x_3 e_{021} + e_{123}$ be the point representation of the PGA. For all Spin-equivariant maps $\phi, \psi : \alg(3,0,1) \to \alg(3,0,1)$, for positions $x, y \in \R^3$, the inner product $\langle \phi(\omega(x)), \psi(\omega(y)) \rangle$ is constant in both $x$ and $y$.
\end{prop}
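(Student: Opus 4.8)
The plan is to use two facts about the PGA: the point embedding $\omega(x)$ is a pure grade-$3$ multivector lying in the translation-orbit of the single trivector $e_{123}=\omega(0)$; and the degenerate basis vector $e_0$ both annihilates the coordinate-dependent part of $\omega(x)$ and is itself null for the inner product, so no coordinate information can survive to a scalar.

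First I would determine $\phi(\omega(x))$ up to constants. Since translations $T_a$ lie in $\Spin(3,0,1)$ and, by definition of the point representation, act on points by $T_a[\omega(x)]=\omega(x+a)$, equivariance of $\phi$ gives $\phi(\omega(x))=T_x[\phi(e_{123})]$. The multivector $m:=\phi(e_{123})$ is fixed in particular by the rotations $\so(3)$ about the origin, which stabilize $e_{123}$; a grade-by-grade inspection shows the $\so(3)$-invariant subspace of $\alg(3,0,1)$ is exactly $\spanop\{1,\,e_0,\,e_{123},\,e_{0123}\}$ (the remaining basis blades transform as copies of the standard $3$-dimensional representation), so $m=a+b\,e_0+c\,e_{123}+d\,e_{0123}$ with $a,b,c,d$ depending only on $\phi$. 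Applying $T_x$, which is an algebra automorphism fixing the scalar $1$ and the invariant vector $e_0$ and sending $e_{123}\mapsto\omega(x)$, yields $T_x[e_{0123}]=T_x[e_0]\,T_x[e_{123}]=e_0\,\omega(x)=e_{0123}$, since the three $e_0$-containing trivectors in $\omega(x)$ are killed by $e_0^2=0$. Hence
\[
\phi(\omega(x))=a+b\,e_0+c\,\omega(x)+d\,e_{0123},
\]
and symmetrically $\psi(\omega(y))=a'+b'\,e_0+c'\,\omega(y)+d'\,e_{0123}$.

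Next I would expand $\langle \phi(\omega(x)),\psi(\omega(y))\rangle$ bilinearly. The GA inner product couples only equal grades, so only the $(1,1)$, $(e_0,e_0)$, $(\omega,\omega)$ and $(e_{0123},e_{0123})$ contributions remain; of these, $\langle e_0,e_0\rangle=0$ and $\langle e_{0123},e_{0123}\rangle=0$ because both products contain $e_0^2$, while $\langle\omega(x),\omega(y)\rangle=\langle e_{123},e_{123}\rangle$ is a nonzero constant, as the grade-$3$ basis trivectors containing $e_0$ pair to zero with each other and with $e_{123}$ (again via $e_0^2=0$). Thus only $a a'\langle 1,1\rangle+c c'\langle\omega(x),\omega(y)\rangle$ survives, which is constant in $x$ and $y$, proving the claim. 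If one only wants the statement for equivariant \emph{linear} maps, the computation is shorter still: the characterization in Section~\ref{sec:equi_linear} gives $\phi(z)=\sum_k\alpha_k\langle z\rangle_k+\sum_k\beta_k\langle e_0 z\rangle_k$, and since $\langle\omega(x)\rangle_k=\delta_{k,3}\,\omega(x)$ and $e_0\,\omega(x)=e_{0123}$ one reads off $\phi(\omega(x))=\alpha_3\,\omega(x)+\beta_4\,e_{0123}$ directly.

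The main obstacle is bookkeeping rather than anything deep: getting the reversal signs and the $e_0$-annihilation right in each product, and, in the nonlinear version, correctly identifying the $\so(3)$-fixed subspace of the $16$-dimensional algebra. The conceptual content lies entirely in the observation that $e_0$ wipes out the $x$-dependent part of $\omega(x)$ and is itself null, so the inner product can never see the point coordinates.
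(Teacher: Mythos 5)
Your proof is correct, but it takes a genuinely different route from the paper's. The paper argues abstractly: the PGA inner product only depends on the components of a multivector along the blades not containing $e_0$ (the ``Euclidean subalgebra'' part), and translations leave those components unchanged; hence $x \mapsto \langle \phi(\omega(x)), v\rangle$ is translation-invariant for any fixed $v$, and a translation-invariant function of $x \in \R^3$ is constant. That argument never needs to know what $\phi(\omega(x))$ actually is. You instead pin down $\phi(\omega(x))$ explicitly: translation equivariance gives $\phi(\omega(x)) = T_x[\phi(e_{123})]$, rotation equivariance forces $\phi(e_{123})$ into the four-dimensional $\so(3)$-fixed subspace $\spanop\{1, e_0, e_{123}, e_{0123}\}$ (your identification of this subspace is correct --- the remaining blades fall into copies of the standard representation), and then the grade-by-grade expansion of the inner product kills every $x$- or $y$-dependent term via $e_0^2 = 0$. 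Both arguments are sound; the paper's is shorter and isolates the single structural fact doing the work (the inner product factors through the translation-trivial quotient), while yours yields strictly more information --- the explicit normal form $\phi(\omega(x)) = a + b\,e_0 + c\,\omega(x) + d\,e_{0123}$ of any Spin-equivariant image of a point, which makes it transparent \emph{which} terms could in principle carry coordinate dependence and why each one is annihilated. Your observation that the linear case follows in one line from the classification in Sec.~\ref{sec:equi_linear} is also correct, though the proposition as stated requires the general (nonlinear) case, which your main argument does cover.
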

\begin{proof}
    The inner product in the PGA is equal to the Euclidean inner product on the Euclidean subalgebra $\alg(3,0,0)$ (given a basis, this is the subalgebra spanned by elements $e_1, e_2, e_3$, but not $e_0$), ignoring the basis elements containing $e_0$.
    Translations act invariantly on the the Euclidean subalgebra.
    Therefore, for any $v \in \alg(3,0, 1)$, if we consider the map $\R^3 \to \R : x \mapsto \langle \phi(\omega(x)), v \rangle$, this map is invariant to translations, and thus constant.
    Filling in $v = \phi(\omega(y))$ proves constancy of $\langle \phi(\omega(x)), \psi(\omega(y)) \rangle$ in $x$. Constancy in $y$ is shown similarly.
\end{proof}

In iP-GATr, this is addressed by computing CGA points from PGA points, and using the CGA inner product in the attention; see \citet{gatr} for details.

\begin{figure}[t]
    \centering%
    \includegraphics[width=0.4\textwidth,clip=true,trim=0 0 0 0.3cm]{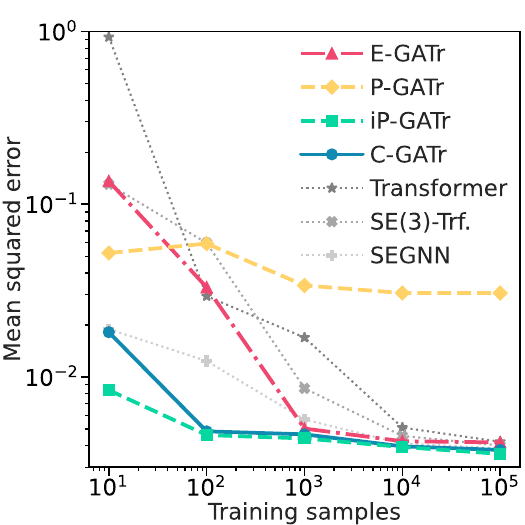}%
    \caption{$n$-body modelling. We show the mean squared error as a function of the number of training samples. We compare E-GATr, P-GATr, iP-GATR, and C-GATr to the equivariant  SE(3)-Transformer~\citep{Fuchs2020-bw} and SEGNN~\citep{Brandstetter2022-hw} as well as to a vanilla transformer.}
    \label{fig:nbody}
\end{figure}

\begin{table}
    \centering%
    \begin{tabular}{lr}
        \toprule
        Method & Approx.\ error\\
        \midrule
        E-GATr & $6.2\;\%$\\
        P-GATr & $7.2\;\%$\\
        iP-GATr & $\mathbf{5.5}\;\%$\\
        C-GATr & $\mathbf{5.5}\;\%$\\
        \midrule
        Transformer & $10.5\;\%$\\
        PointNet++ \citep{qi2017pointnet} & $12.3\;\%$\\
        GEM-CNN \citep{de2020gauge} & $7.7\;\%$\\
        \bottomrule
    \end{tabular}
    \caption{Arterial wall-shear-stress estimation. We show the mean approximation error in percent on the prediction of arterial wall shear stress~\citep{suk2022mesh}. We compare E-GATr, P-GATr, iP-GATr, and C-GATr.
    As baselines, we show the Transformer results from \citet{gatr}, and two baselines from \citet{suk2022mesh}.}
    \label{tbl:experiments}
\end{table}

\section{EXPERIMENTS}
\label{sec:experiments}

We empirically compare the variants in a $n$-body modelling experiment and a hemodynamic estimation task.

\subsection{$n$-body modelling}

We first benchmark the GATr variants on an $n$-body modelling problem. Given masses, initial positions, and initial velocities of 16 point masses interacting with Newtonian gravity, the goal is to predict the final positions after 100 time steps. To make the problem more challenging, we consider a dataset in which each sample has a variable number of clusters, each with a variable number of bodies.

Figure~\ref{fig:nbody} shows the prediction error as a function of the number of training samples. We find that the E-GATr, iP-GATr, and C-GATr models achieve an excellent performance when trained on sufficient data, outperforming or matching the equivariant baselines SE(3)-Transformer~\citep{Fuchs2020-bw} and SEGNN~\citep{Brandstetter2022-hw} and a vanilla transformer. The naive P-GATr does not perform well, a consequence of its fundamentally limited expressivity, discussed in the previous section.

The C-GATr and iP-GATr model are more sample efficient than all baselines and the E-GATr model. We attribute this to their larger symmetry group: they are equivariant with respect to any combination of translations and rotations, while E-GATr is only equivariant with respect to the much smaller group of rotations around the center of mass.

\begin{table*}[t!]
    \setlength{\tabcolsep}{4pt}
    \footnotesize
    \centering%
    \begin{tabular}{l llll}
        \toprule
         & E-GATr & P-GATr & iP-GATr & C-GATr \\
        \midrule
        Simplicity & \threegators & \twogators & \onegator & \twogators \\
        Representational \ richness & \onegator & \twogators & \twogators & \threegators \\
        Expressivity & \threegators & \onegator & \threegators & \threegators \\
        \midrule
        Memory & \threegators & \twogators & \twogators & \onegator \\
        Stability & \threegators & \threegators & \threegators & \onegator \\
        Performance & \twogators & \onegator & \threegators & \threegators \\
        \bottomrule
    \end{tabular}
    \caption{Algebras ranked from \threegators\ (best) to \onegator\ (worst) along theoretical qualities (top) and empirical observations (bottom). Alligator figure from Twemoji library by Twitter licensed under CC-BY 4.0
}
    \label{tbl:ranking}
\end{table*}

\subsection{Arterial wall-shear-stress estimation}

Next, we test the GATr variants on a more complex problem: predicting the wall shear stress exerted by  blood flow on the arterial wall, using a benchmark dataset proposed by \citet{suk2022mesh}.
This is a challenging problem for machine learning because the geometric objects are complex---the artery wall is parameterized as a mesh of around 7000 nodes---and the dataset only consists of 1600 meshes.
We describe the experimental setup in more detail in
the Appendix.

Table~\ref{tbl:experiments} shows our results, with baseline results taken from \cite{gatr} and \cite{suk2022mesh}. All GATrs outperform the baselines.
C- \& iP-GATr which use distance-aware attention, perform best.
We found that C-GATr can suffer from instabilities, as discussed in Sec.~\ref{sec:normalization}.

\section{RELATED WORK}
\label{sec:related}

\paragraph{Geometric deep learning}
Constructing neural networks that are equivariant to symmetry groups \citep{cohen2016group} is a cornerstone of contemporary geometric deep learning  \citep{bronstein2021geometric}. Particularly related are the methods that process 3D point clouds in a manner equivariant to the Euclidean symmetries of translation, rotations, and, if desired, mirroring. Typically, these employ linear message passing or transformer architectures  \citep{thomas2018tensor, Fuchs2020-bw, Satorras2021-hf, Brandstetter2022-hw, Batatia2022-ab, Batzner2022-zr, Frank2022-fu}.

\paragraph{Geometric algebras in deep learning}
Geometric (or Clifford) algebras were first conceived in the 19th century \cite{grassmann1844lineale, clifford1878applications} and have been used widely in quantum physics \citep{dirac}. 
More recently, geometric algebras have gained a devoted following in computer graphics \cite{DorstFontijneMann07}. 

The application to GAs in neural networks is not new \citep{bayro1996new}, but has recently experienced a resurgence. \cite{brandstetter2022clifford} uses EGA networks to study differential equations, while \cite{ruhe2023geometric} use the EGA and PGA for message passing. Neither of those architectures is equivariant, however. Equivariant GA networks were proposed by \cite{ruhe2023clifford} using a message passing architecture, and \cite{gatr} using a transformer architecture based on the PGA. This paper extends the equivariant GA transformer to other algebras related to 3D Euclidean geometry.

\section{CONCLUSION}


The geometric algebra transformer is a powerful method to build $\euc(3)$ equivariant models that scale to large problems due to the transformer backend. In this work, we have generalized the original GATr model, which was based on the projective geometic algebra, to new geometric algebras: the Euclidean and conformal algebras. This construction involved finding the equivariant linear maps and effective normalization layers. From a theoretical analysis of the GATr variants, we found that the Euclidean E-GATr and conformal C-GATr have sufficient expressivity, due to the non-degeneracy of the algebra, while the projective P-GATr does not. Addition of the join bilinear operation, as was done in the original improved projective iP-GATr, can address these issues at the cost of additional complexity in the model. E-GATr can not represent translations or absolute positions, and thus must rely on centering to be $\euc(3)$ equivariant. This reduces the symmetry group and thus sample efficiency. In our experiments, we find that E-GATr has the lowest computational cost, but indeed tends to overfit faster. P-GATr lacks expressivity and thus doesn't perform well, while the original iP-GATr and C-GATr perform best. Of these, C-GATr enjoys the simplicity of just relying on geometric products, while iP-GATr needs the complexity of the join bilinear operation, as well as a hand-crafted attention method. On the other hand, iP-GATr appears more stable in training than C-GATr.
Overall, we find a nuanced trade-off between the variants, which we score in Table \ref{tbl:ranking}.


\bibliography{refs}

\begin{thebibliography}{30}
\providecommand{\natexlab}[1]{#1}
\providecommand{\url}[1]{\texttt{#1}}
\expandafter\ifx\csname urlstyle\endcsname\relax
  \providecommand{\doi}[1]{doi: #1}\else
  \providecommand{\doi}{doi: \begingroup \urlstyle{rm}\Url}\fi

\bibitem[Bronstein et~al.(2021)Bronstein, Bruna, Cohen, and Veli{\v
  c}kovi{\'c}]{bronstein2021geometric}
Michael~M Bronstein, Joan Bruna, Taco Cohen, and Petar Veli{\v c}kovi{\'c}.
\newblock Geometric deep learning: Grids, groups, graphs, geodesics, and
  gauges.
\newblock 2021.

\bibitem[Cohen and Welling(2016)]{cohen2016group}
Taco Cohen and Max Welling.
\newblock Group equivariant convolutional networks.
\newblock In \emph{International conference on machine learning}, pages
  2990--2999. PMLR, 2016.

\bibitem[Vaswani et~al.(2017)Vaswani, Shazeer, Parmar, Uszkoreit, Jones, Gomez,
  Kaiser, and Polosukhin]{NIPS2017_3f5ee243}
Ashish Vaswani, Noam Shazeer, Niki Parmar, Jakob Uszkoreit, Llion Jones,
  Aidan~N Gomez, {\L}ukasz Kaiser, and Illia Polosukhin.
\newblock Attention is all you need.
\newblock In \emph{Advances in Neural Information Processing Systems},
  volume~30, 2017.

\bibitem[Rabe and Staats(2021)]{rabe2021self}
Markus~N Rabe and Charles Staats.
\newblock Self-attention does not need {$O(n^2)$} memory.
\newblock \emph{arXiv:2112.05682}, 2021.

\bibitem[Dao et~al.(2022)Dao, Fu, Ermon, Rudra, and
  R{\'e}]{dao2022flashattention}
Tri Dao, Dan Fu, Stefano Ermon, Atri Rudra, and Christopher R{\'e}.
\newblock {FlashAttention}: Fast and memory-efficient exact attention with
  {IO}-awareness.
\newblock \emph{Advances in Neural Information Processing Systems},
  35:\penalty0 16344--16359, 2022.

\bibitem[Fuchs et~al.(2020)Fuchs, Worrall, Fischer, and Welling]{Fuchs2020-bw}
Fabian~B Fuchs, Daniel~E Worrall, Volker Fischer, and Max Welling.
\newblock {SE(3)-Transformers}: {3D} {Roto-Translation} equivariant attention
  networks.
\newblock In \emph{Advances in Neural Information Processing Systems}, 2020.

\bibitem[Jumper et~al.(2021)Jumper, Evans, Pritzel, Green, Figurnov,
  Ronneberger, Tunyasuvunakool, Bates, {\v{Z}}{\'\i}dek, Potapenko,
  et~al.]{jumper2021highly}
John Jumper, Richard Evans, Alexander Pritzel, Tim Green, Michael Figurnov,
  Olaf Ronneberger, Kathryn Tunyasuvunakool, Russ Bates, Augustin
  {\v{Z}}{\'\i}dek, Anna Potapenko, et~al.
\newblock Highly accurate protein structure prediction with alphafold.
\newblock \emph{Nature}, 596\penalty0 (7873):\penalty0 583--589, 2021.

\bibitem[Liao and Smidt(2022)]{liao2022equiformer}
Yi-Lun Liao and Tess Smidt.
\newblock Equiformer: Equivariant graph attention transformer for 3d atomistic
  graphs.
\newblock \emph{arXiv:2206.11990}, 2022.

\bibitem[Brehmer et~al.(2023)Brehmer, de~Haan, Behrends, and Cohen]{gatr}
Johann Brehmer, Pim de~Haan, S{\"o}nke Behrends, and Taco Cohen.
\newblock Geometric algebra transformers.
\newblock In \emph{Advances in Neural Information Processing Systems},
  volume~37, 2023.
\newblock URL \url{https://arxiv.org/abs/2305.18415}.

\bibitem[Dorst and De~Keninck()]{pga-tour}
Leo Dorst and Steven De~Keninck.
\newblock A guided tour to the plane-based geometric algebra {PGA}.
\newblock \url{https://bivector.net/PGA4CS.pdf}.
\newblock URL \url{https://bivector.net/PGA4CS.pdf}.
\newblock Accessed: 2023-4-28.

\bibitem[Dorst et~al.(2009)Dorst, Mann, and Fontijne]{Dorst2009}
Leo Dorst, Stephen Mann, and Daniel Fontijne.
\newblock \emph{Geometric Algebra for Computer Science}.
\newblock 2009.

\bibitem[Finzi et~al.(2021)Finzi, Welling, and Wilson]{Finzi2021-vj}
Marc Finzi, Max Welling, and Andrew~Gordon Wilson.
\newblock A practical method for constructing equivariant multilayer
  perceptrons for arbitrary matrix groups.
\newblock April 2021.
\newblock URL \url{http://arxiv.org/abs/2104.09459}.

\bibitem[Xiong et~al.(2020)Xiong, Yang, He, Zheng, Zheng, Xing, Zhang, Lan,
  Wang, and Liu]{xiong2020layer}
Ruibin Xiong, Yunchang Yang, Di~He, Kai Zheng, Shuxin Zheng, Chen Xing,
  Huishuai Zhang, Yanyan Lan, Liwei Wang, and Tieyan Liu.
\newblock On layer normalization in the transformer architecture.
\newblock In \emph{International Conference on Machine Learning}, pages
  10524--10533. PMLR, 2020.

\bibitem[Brandstetter et~al.(2022{\natexlab{a}})Brandstetter, Hesselink,
  van~der Pol, Bekkers, and Welling]{Brandstetter2022-hw}
Johannes Brandstetter, Rob Hesselink, Elise van~der Pol, Erik~J Bekkers, and
  Max Welling.
\newblock Geometric and physical quantities improve {E(3)} equivariant message
  passing.
\newblock In \emph{International Conference on Learning Representations},
  2022{\natexlab{a}}.

\bibitem[Qi et~al.(2017)Qi, Yi, Su, and Guibas]{qi2017pointnet}
Charles~Ruizhongtai Qi, Li~Yi, Hao Su, and Leonidas~J Guibas.
\newblock Pointnet++: Deep hierarchical feature learning on point sets in a
  metric space.
\newblock \emph{Advances in Neural Information Processing Systems}, 30, 2017.

\bibitem[De~Haan et~al.(2021)De~Haan, Weiler, Cohen, and Welling]{de2020gauge}
Pim De~Haan, Maurice Weiler, Taco Cohen, and Max Welling.
\newblock Gauge equivariant mesh {CNNs}: Anisotropic convolutions on geometric
  graphs.
\newblock \emph{International Conference on Learning Representations}, 2021.

\bibitem[Suk et~al.(2022)Suk, de~Haan, Lippe, Brune, and
  Wolterink]{suk2022mesh}
Julian Suk, Pim de~Haan, Phillip Lippe, Christoph Brune, and Jelmer~M
  Wolterink.
\newblock Mesh neural networks for se (3)-equivariant hemodynamics estimation
  on the artery wall.
\newblock \emph{arXiv:2212.05023}, 2022.

\bibitem[Thomas et~al.(2018)Thomas, Smidt, Kearnes, Yang, Li, Kohlhoff, and
  Riley]{thomas2018tensor}
Nathaniel Thomas, Tess Smidt, Steven Kearnes, Lusann Yang, Li~Li, Kai Kohlhoff,
  and Patrick Riley.
\newblock Tensor field networks: Rotation-and translation-equivariant neural
  networks for 3d point clouds.
\newblock \emph{arXiv:1802.08219}, 2018.

\bibitem[Satorras et~al.(2021)Satorras, Hoogeboom, and
  Welling]{Satorras2021-hf}
V{\'\i}ctor~Garcia Satorras, Emiel Hoogeboom, and Max Welling.
\newblock {E(n)} equivariant graph neural networks.
\newblock In Marina Meila and Tong Zhang, editors, \emph{Proceedings of the
  38th International Conference on Machine Learning}, volume 139 of
  \emph{Proceedings of Machine Learning Research}, pages 9323--9332. PMLR,
  2021.

\bibitem[Batatia et~al.(2022)Batatia, Kovacs, Simm, Ortner, and
  Csanyi]{Batatia2022-ab}
Ilyes Batatia, David~Peter Kovacs, Gregor N~C Simm, Christoph Ortner, and Gabor
  Csanyi.
\newblock {MACE}: Higher order equivariant message passing neural networks for
  fast and accurate force fields.
\newblock In \emph{Advances in Neural Information Processing Systems}, 2022.

\bibitem[Batzner et~al.(2022)Batzner, Musaelian, Sun, Geiger, Mailoa,
  Kornbluth, Molinari, Smidt, and Kozinsky]{Batzner2022-zr}
Simon Batzner, Albert Musaelian, Lixin Sun, Mario Geiger, Jonathan~P Mailoa,
  Mordechai Kornbluth, Nicola Molinari, Tess~E Smidt, and Boris Kozinsky.
\newblock {E(3)}-equivariant graph neural networks for data-efficient and
  accurate interatomic potentials.
\newblock \emph{Nat. Commun.}, 13\penalty0 (1):\penalty0 2453, May 2022.

\bibitem[Frank et~al.(2022)Frank, Unke, and Muller]{Frank2022-fu}
Thorben Frank, Oliver~Thorsten Unke, and Klaus~Robert Muller.
\newblock So3krates: Equivariant attention for interactions on arbitrary
  length-scales in molecular systems.
\newblock October 2022.

\bibitem[Grassmann(1844)]{grassmann1844lineale}
Hermann Grassmann.
\newblock \emph{Die lineale Ausdehnungslehre}.
\newblock Otto Wigand, Leipzig, 1844.

\bibitem[Clifford(1878)]{clifford1878applications}
William~Kingdon Clifford.
\newblock {Applications of Grassmann's Extensive Algebra}.
\newblock \emph{Amer. J. Math.}, 1\penalty0 (4):\penalty0 350--358, 1878.

\bibitem[Dirac and Fowler(1928)]{dirac}
Paul Adrien~Maurice Dirac and Ralph~Howard Fowler.
\newblock The quantum theory of the electron.
\newblock \emph{Proceedings of the Royal Society of London. Series A,
  Containing Papers of a Mathematical and Physical Character}, 117\penalty0
  (778):\penalty0 610--624, 1928.
\newblock \doi{10.1098/rspa.1928.0023}.
\newblock URL
  \url{https://royalsocietypublishing.org/doi/abs/10.1098/rspa.1928.0023}.

\bibitem[Dorst et~al.(2007)Dorst, Fontijne, and Mann]{DorstFontijneMann07}
Leo Dorst, Daniel Fontijne, and Stephen Mann.
\newblock \emph{Geometric Algebra for Computer Science: An Object-oriented
  Approach to Geometry}.
\newblock Morgan Kaufmann Series in Computer Graphics. Morgan Kaufmann,
  Amsterdam, 2007.
\newblock ISBN 978-0-12-369465-2.

\bibitem[Bayro-Corrochano et~al.(1996)Bayro-Corrochano, Buchholz, and
  Sommer]{bayro1996new}
Eduardo Bayro-Corrochano, Sven Buchholz, and Gerald Sommer.
\newblock A new self-organizing neural network using geometric algebra.
\newblock In \emph{Proceedings of 13th International Conference on Pattern
  Recognition}, volume~4, pages 555--559. IEEE, 1996.

\bibitem[Brandstetter et~al.(2022{\natexlab{b}})Brandstetter, Berg, Welling,
  and Gupta]{brandstetter2022clifford}
Johannes Brandstetter, Rianne van~den Berg, Max Welling, and Jayesh~K Gupta.
\newblock Clifford neural layers for {PDE} modeling.
\newblock \emph{arXiv:2209.04934}, 2022{\natexlab{b}}.

\bibitem[Ruhe et~al.(2023{\natexlab{a}})Ruhe, Gupta, de~Keninck, Welling, and
  Brandstetter]{ruhe2023geometric}
David Ruhe, Jayesh~K Gupta, Steven de~Keninck, Max Welling, and Johannes
  Brandstetter.
\newblock Geometric clifford algebra networks.
\newblock \emph{arXiv:2302.06594}, 2023{\natexlab{a}}.

\bibitem[Ruhe et~al.(2023{\natexlab{b}})Ruhe, Brandstetter, and
  Forr\'e]{ruhe2023clifford}
David Ruhe, Johannes Brandstetter, and Patrick Forr\'e.
\newblock Clifford group equivariant neural networks.
\newblock \emph{arXiv:2305.11141}, 2023{\natexlab{b}}.

\end{thebibliography}

\clearpage

\appendix
\onecolumn
\section*{APPENDIX}


\section{CONSTRUCTING GENERIC MULTILINEAR MAPS}
\label{app:multilinear}
\begin{prop}
\label{thm:app-non-equi-multilinear}
    Let $l \ge 1$.
    \begin{enumerate}
        \item[(1)] If and only if the inner product of $\R^{p,q,r}$ is non-degenerate ($r=0$), any multilinear map $\alg(p,q,r)^l \to \alg(p,q,r)$ can be constructed from addition, geometric products, grade projections and constant multivectors.
    
        \item[(2)] Furthermore, any multilinear map $\alg(p,0,1)^l \to \alg(p,0,1)$ can be constructed from addition, geometric products, the join bilinear, grade projections and constant multivectors.
    \end{enumerate}
\end{prop}
\begin{proof}
    \emph{Proof of (1), ``$\Rightarrow$'':}
    First, let $r=0$. Then let $e_i$ be an orthogonal basis of $\R^{p,q,0}$ where each $e_i$ squares to $\pm 1$. This gives a basis $e_{\mathbf i}$, with multi-index $\mathbf i \in 2^{p+q}$, of the algebra $\alg(p,q,0)$. This basis is also orthogonal and each element $e_{i_1i_2...i_k}$ squares to $\langle e_{i_1i_2...i_k}, e_{i_1i_2...i_k} \rangle = \langle e_{i_1i_2...i_k} \widetilde{e_{i_1i_2...i_k}} \rangle_0 =e_{i_1}e_{i_2}...e_{i_k}e_{i_k}...e_{i_2}e_{i_1}= \prod_k \langle e_k, e_k \rangle = \pm 1$.
    
    Now, let $\phi: \alg(p,q,0) \to \alg(p,q,0)$ be any linear map. For each basis element of the algebra, let $x_{\mathbf i}:=\phi(e_\mathbf i) / \langle e_\mathbf i, e_\mathbf i \rangle$. Then $\phi$ can then be written as:
    \[
        \psi(w) = \sum_{\mathbf i \in 2^{p+q+r}} x_{\mathbf i} \langle w \, \widetilde{e_\mathbf i} \rangle_0
    \]
    It is easy to see that for any basis element $e_\mathbf i$, $\phi(e_\mathbf i)=\psi(e_\mathbf i)$, hence the linear maps coincide.

    For a multilinear map $\phi:\alg(p,q,0)^l \to \alg(p,q,0)$, a similar construction can be made:
    \begin{align*}
        \phi(w_1, ..., w_l) = \sum_{\mathbf i_1 \in 2^{p+q+r}} ...\sum_{\mathbf i_l \in 2^{p+q+r}} x_{\mathbf i_1, ..., \mathbf i_l} \langle w_1 \, \widetilde{e_{\mathbf i_1}} \rangle_0 ... \langle w_l \, \widetilde{e_{\mathbf i_l}} \rangle_0 \\
        \text{with}
        \quad
        x_{\mathbf i_1, ..., \mathbf i_l}=\frac{\phi(e_{\mathbf i_1}, ..., e_{\mathbf i_l})}{\langle e_{\mathbf i_1}, e_{\mathbf i_1}\rangle ... \langle e_{\mathbf i_l}, e_{\mathbf i_l}\rangle}
    \end{align*}

    \emph{Proof of (1), ``$\Leftarrow$'':}
    Let $r > 0$. Let $e_0 \in \R^{p,q,r}$ denote a nonzero radical vector, meaning that for all $x \in \R^{p,q,r}$, $\langle e_0, x \rangle =0$. Consider the multilinear map $\phi:\alg(p,q,r)^l \to \alg(p,q,r)$ sending input $(e_0, ..., e_0) \mapsto 1$ and all other inputs to 0. This map can not be constructed from within the algebra. To see this, consider any nonzero $k$-vector $e_0 \wedge y$ for a $(k-1)$-vector $y$. The only way of mapping $e_0 \wedge y$ to a scalar involves multipication with $e_0$, which results in a zero scalar component.

    \emph{Proof of (2):}
    Now consider the projective algebra $\alg(p, 0, 1)$ equipped with the join $\vee$, a bilinear operation $\alg(p,0,1) \times \alg(p, 0, 1) \to \alg(p, 0, 1)$ mapping algebra basis elements $e_{\mathbf i} \vee e_{\mathbf j}$ to $\pm e_{\mathbf k}$, where $\mathbf k$ contains all indices that occur in both $\mathbf i$ and $\mathbf j$, as long as all $p+1$ indices are present as at least once in either $\mathbf i$ or $\mathbf j$. Otherwise, $e_{\mathbf i} \vee e_{\mathbf j}=0$. See \cite{pga-tour} for details. In particular, the join satisfies $e_{012...p} \vee 1 = 1$.
    
    With the join in hand, any linear map $\phi : \alg(p, 0, 1) \to \alg(p, 0, 1)$ can be written as:
    \[
    \psi(w) = \sum_{\mathbf i \in 2^{p+1}} x_{\mathbf i} \langle (w \wedge e_{\setminus \mathbf i}) \vee 1 \rangle_0
    \]
    where $x_{\mathbf i}:=\phi(e_\mathbf i)$ and $e_{\setminus \mathbf i}$ contains all indices absent in $\mathbf i$, in an order such that $e_{\mathbf i} \wedge e_{\setminus \mathbf i}=e_{012...p}$. For any basis element $e_\mathbf j$, $\langle (e_\mathbf j \wedge e_{\setminus \mathbf i}) \vee 1 \rangle_0=1$ if $\mathbf j = \mathbf i$ and 0 otherwise, because if $\mathbf j$ lacks any index in $\mathbf i$, the join yields a zero, and if it $\mathbf j$ has any indices not in $\mathbf i$, the join results in a non-scalar, which becomes zero with the grade projection. Therefore, $\psi(e_\mathbf i)=\phi(e_\mathbf i)$ for all basis elements $e_\mathbf i$, and the linear maps are equal. As before, this construction easily generalizes to multi-linear maps.
\end{proof}

\section{NUMERICALLY COMPUTING EQUIVARIANT MULTIlinear maps}
\label{app:numerical}
\label{app:equi-lin}
\subsection{Lie group equivariance constraint solving via Lie algebras}
First, let's discuss in generality how to solve group equivariance constraints via the Lie algebra, akin to \cite{Finzi2021-vj}.

Let $G$ be a Lie group, $\mathfrak g$ be its algebra.
Let $\exp: \mathfrak g \to G$ be the Lie group exponential map.

A group representation $(\rho, V)$ induces a Lie algebra representation:
$d\rho : \mathfrak{g} \to \mathfrak{gl}(V)$, linearly sending $X \in \mathfrak g$ to a linear map $d\rho(X): V \to V$, satisfying $\rho(\exp(X))=\exp(d\rho(X))$, where the latter $\exp$ is the matrix exponential.

Given a real Lie algebra representation $(\rho, V)$, there is a dual representation $(\rho^*, V^*)$ satisfying $\rho^*(g)=\rho(g^{-1})^T$. It is easy to see that $d\rho^*(X)=-d\rho(X)^T$.

For two group representations $(\rho_1, V_1)$ and $(\rho_2, V_2)$, there is a tensor representation $(\rho_1 \otimes \rho_2, V_1 \otimes V_2)$ with Lie algebra representation $d(\rho_1 \otimes \rho_2) = 1_{V_1} \otimes d\rho_2 + d\rho_1 \otimes 1_{V_2}$.

$(\rho_1, V_1)$ and $(\rho_2, V_2)$, a linear map $\phi: V_1 \to V_2$ is equivariant if and only if $\phi$ is invariant to the group representation $\rho_2 \otimes \rho_1^*$, when flattening $\vecop(\phi)\in V_2 \otimes V_1^*$: for all $g \in G$,
\[
\rho_2(g) \phi = \phi \rho_1(g) \iff (\rho_2 \otimes \rho_1^*)(g)\vecop(\phi) = \vecop(\phi)
\]

Any Lie group $G$ is equal to a semi-direct product $G^0 \rtimes D$, for $G^0 \subseteq G$ the subgroup connected to the identity and $D$ a discrete group. Let $B$ be a set of basis elements of the Lie algebra. Then $\exp(\mathrm{span}(B)) = G^0$.

First, consider a connected Lie group $G^0$, and a basis $B$ of the Lie algebra, and a representation $(\rho, V)$. Then
\begin{align*}
&&\forall g \in G^0,\, \rho(g)v=v \\
\iff && \forall X \in \mathfrak g, \, \rho(\exp(X))v=v \\
\iff && \forall X \in \mathfrak g, \, \exp(d\rho(X))v=v \\
\iff && \forall X \in \mathfrak g, \, d\rho(X)v=0 \\
\iff && \forall X \in B, \, d\rho(X)v=0 \\
\end{align*}
where in for the final step, we note that $d\rho$ is linear, so linearly dependent algebra vectors generate linearly dependent csontraints, and just constraining by a basis of the algebra suffices.

To test invariance to a non-connected Lie group, we need to additionally constrain for the discrete group $D$, generated by $D' \subset D$, leading to:
\[
\forall g \in G\, \rho(g)v = v \iff \begin{cases}
    d\rho(X)v=0 &\forall X \in B \\
    (\rho(g)-1_V)v=0 &\forall g \in D' \\
\end{cases}
\]

If $V$ is $d$-dimensional There are thus $|B|+|D'|$ $d \times d$ matrices and $v$ needs to be in the null space of each of these, or equivalently in the null space of the concatenated $((|B|+|D'|)d) \times d$ matrix. This can be done numerically via e.g. $\texttt{scipy.linalg.nullspace}$. When $\rho$ can be decomposed into subrepresentations $(\rho_a \oplus \rho_b, V_a \oplus V_b)$, the invariant vectors can be found separately, making computing the null space more efficient.

Combining the framing of equivariance as invariance, and finding invariant vectors via a null space, we can find the linear equivariant maps $\phi: V_1 \to V_2$ by finding the nullspace of:
\[
\forall g \in G\, \rho_2(g)\phi = \phi \rho_1(g) \iff \begin{cases}
    (d\rho_2\otimes 1_{V_1^*}-1_{V_2} \otimes d\rho_1^T)(X)\vecop{\phi}=0 &\forall X \in B \\
    ((\rho_2\otimes \rho_1^*)(g)-1_V)\vecop{\phi}=0 &\forall g \in D' \\
\end{cases}
\]

To find equivariant multilinear maps $\phi: V_{i_1} \otimes V_{i_2} \otimes ... \otimes V_{i_l} \to V$, we simply set $\rho_1 = \rho_{i_1} \otimes \rho_{i_2} \otimes ... \otimes \rho_{i_l}$, with $d\rho_1 = d\rho_{i_1} \otimes 1_{V_{i_2}} \otimes ... + 1_{V_{i_1}} \otimes d\rho_{i_2} \otimes ... + ...$.

\subsection{GA equivariance solving of linear maps}
For the GA, we'll consider $V=\alg(p,q,r)$ and $\rho(u)(x)=u \widehat{x}u^{-1}$. Define $d\rho(X)(v)=Xv - vX$. Any GA has the exponential map endomorphism, defined through the Taylor series:
\[
\exp : \alg(p,q,r) \to \alg(p,q,r) : x \mapsto 1 + x + \frac{1}{2!}x^2 + \frac{1}{3!}x^3 + ... 
\]

\paragraph{EGA}
Now, for the EGA, the bivectors $\alg(3,0,0)_2$ are the Lie algebra $\spinalg(3,0,0)$ of the connected Lie group $\Spin(3,0,0)$ of even number of reflections. The Lie group exponential map is the GA exponential map. The entire Pin group decomposes as $\Pin(3,0,0) = \Spin(3,0,0)\rtimes \{1, e_1\}$.
The bivectors have a basis $\spinalg(3,0,0)=\alg(3,0,0)_2=\spanop(e_{12}, e_{23}, e_{13})$. Therefore, a linear map $\phi: \alg(3,0,0) \to \alg(3,0,0)$ is equivariant to $\Pin(3,0,0)$, and hence to $\orth(3)$, which it doubly covers, and $\euc(3)$ with trivial action under translation, if and only if:
\begin{align*}
\begin{cases}
    (d\rho\otimes 1-1 \otimes d\rho^T)(X)\vecop{\phi}=0 &\forall X \in \{e_{12}, e_{23}, e_{13}\} \\
    ((\rho\otimes \rho^*)(e_1)-1_V)\vecop{\phi}=0 &
\end{cases}
\end{align*}

Studying the nullspace, we find that all equivariant linear maps can be written as linear combinations of grade projections, giving 4 independent maps:
\[
\phi : \alg(3,0,0) \to \alg(3, 0, 0) : x \mapsto \sum_{k=0}^3 \alpha_k \langle x \rangle_k
\]

\paragraph{PGA}
For the PGA, similarly, $\Pin(3,0,1)$ doubly covers $\euc(3)$. The group $\Spin(3,0,1)$ is its connected subgroup, whose algebra are the bivectors, and the Pin group decomposes as the Spin group and a mirroring. A linear map $\phi: \alg(3,0,1) \to \alg(3,0,1)$ is therefore equivariant to $\Pin(3,0,1)$, and hence $\euc(3)$, if and only if:
\begin{align*}
\begin{cases}
    (d\rho\otimes 1-1 \otimes d\rho^T)(X)\vecop{\phi}=0 &\forall X \in \{e_{12}, e_{23}, e_{13}, e_{01}, e_{02}, e_{03}\} \\
    ((\rho\otimes \rho^*)(e_1)-1_V)\vecop{\phi}=0 &
\end{cases}
\end{align*}

Studying the nullspace, we find that all equivariant linear maps can be written as linear combinations of grade projections and multiplications with $e_0$, leading to 9 independent maps:
\[
\phi : \alg(3,0,1) \to \alg(3, 0, 1) : x \mapsto \sum_{k=0}^4 \alpha_k \langle x \rangle_k + \sum_{k=1}^4 \beta_k \langle e_0 x \rangle_k
\]

This result is in accordance with what was shown analytically in \cite{gatr}.

\paragraph{CGA}
Let $\iota: \alg(3,0,1) \to \alg(4,0,1)$ be the algebra homomorphism with $\iota(e_i)=e_i, \iota(e_0)=\infty$.
For the CGA, $E(3)$ is doubly covered by the subgroup $\iota(\Pin(3,0,1))$ of $\Pin(4,1,0)$, hence a linear map $\phi: \alg(4,1,0) \to \alg(4,1,0)$ is equivariant to $\iota(\Pin(3,0,1))$, and hence $\euc(3)$, if and only if:
\begin{align*}
\begin{cases}
    (d\rho\otimes 1-1 \otimes d\rho^T)(X)\vecop{\phi}=0 &\forall X \in \{e_{12}, e_{23}, e_{13}, e_{\infty 1}, e_{\infty 2}, e_{\infty 3}\} \\
    ((\rho\otimes \rho^*)(e_1)-1_V)\vecop{\phi}=0 &
\end{cases}
\end{align*}

Studying the nullspace, we find that all equivariant linear maps can be written as linear combinations of grade projections and multiplications with $\infty$, giving 20 independent maps in total:
\begin{align*}
\phi : \alg(4,1,0) \to &\alg(4, 1, 0) \\
x \mapsto &\sum_{k=0}^5 \alpha_k \langle x \rangle_k \\
+ &\sum_{k=1}^5 \beta_k \langle \infty \langle x \rangle_k \rangle_{k-1} \\
+ &\sum_{k=0}^4 \gamma_k \langle \infty \langle x \rangle_k \rangle_{k+1} \\
+ &\sum_{k=1}^4 \delta_k \infty\langle \infty \langle x \rangle_k \rangle_{k-1}
\end{align*}

\paragraph{$\seuc(3)$ equivariance}
To consider $\seuc(3)$ equivariance, we just have to be equivariant tot the connected part rotational of the Lie group, so remove the mirror constraint in the above equations. For the EGA, PGA and CGA, we find numerically that the $\seuc(3)$-equivariant maps are the same as the $\euc(3)$-equivariant linear maps, but possibly combined with multiplication with the pseudoscalar: $e_{123}$ for the EGA, $e_{0123}$ for the PGA and $e_{123}\wedge o \wedge \infty$ for the CGA. This is because the pseudoscalar is an invariant, up to a sign flip due to mirroring, thus $\seuc(3)$ invariant.

\subsection{Multilinear equivariant map solving}
To find multilinear equivariant maps efficiently, we found it necessary to separate out the grades. For any geometric algebra, the $\Pin(p,q,r)$ representation decomposes into sum of a representation $(\rho_k, \alg(p,q,r)^k)$ of $k$-vectors, for each grade $k$. Then we use the above procedure to find the equivariant multilinear maps $\phi: \alg(p,q,r)^{i_1} \otimes \alg(p,q,r)^{i_2} \otimes ... \otimes \alg(p,q,r)^{i_l} \to \alg(p,q,r)^o$, taking as inputs an $i_1$-vector, and $i_2$-vector, ..., and an $i_l$-vector and outputting an $o$-vector.

\subsection{Numerically testing expressivity}
In the above subsections, we show how one can compute all equivariant multilinear maps for a given algebra. In the main paper, we stated the following conjecture:
\begin{conjecture}
    Let $l \ge 2$.
    For the EGA and the CGA, and not for the PGA, any $\euc(3)$-equivariant (resp. $\seuc(3)$-equivariant) multilinear map $\alg(p,q,r)^l \to \alg(p,q,r)$ can be constructed out of a combination of the geometric product and $\euc(3)$-equivariant (resp. $\seuc(3)$-equivariant) linear maps. For PGA, any $\seuc(3)$-equivariant multilinear map can be expressed using equivariant linear maps, the geometric product and the join.
\end{conjecture}

To test this, we explicitly construct all linear maps via the algebra. Let $\phi^\alpha_{\mu \nu}$ be a basis for the linear equivariant maps of an algebra, so that for each $\alpha$, $y_a=\sum_b \phi^\alpha_{a b}x_b$ is an equivariant linear map, where roman indices enumerate multivector indices. Also, let $\Phi^\beta_{abc}$ be a basis for the bilinears in the algebra, so that for each $\beta$, $z_a = \sum_{bc}\Phi^\beta_{abc}x_b y_c$ is a bilinear. For most algebras, we'll just consider the geometric product, but for the PGA, we can also consider the join, which is only $\seuc(3)$-equivariant \cite[Prop 7]{gatr}. Then, for example, for $l=2$, all bilinear maps constructable for two inputs $x^1, x^2$ from the linears and bilinears are:
\[
\sum_{bc}\Omega^{\sigma \alpha\beta\gamma\delta}_{abc}x^1_bx^2_c= \sum_{bcdef}\phi^\alpha_{ab}\Phi^\beta_{bcd} \left( \phi^\gamma_{ce}x^{\sigma_1}_e \right) \left(\phi^\delta_{df}x^{\sigma_2}_f\right)
\]
where $\sigma \in S_2$ is a permutation over the two inputs. This approach can be recursively applied to construct any multilinear map from the bilinears and linears. As the algebra is not commutative, we need to take care to consider all permutations of the inputs. For computational efficiency to soften the growth in the number of Greek basis indices, during the reduction for multilinear maps, we apply a singluar value decomposition of the basis of maps, re-express the basis in the smallest number of basis maps.

With this strategy, we were able to verify the above conjecture for $2 \le l \le 4$.

\section{EXPERIMENT DETAILS}
\label{app:experiment}

\paragraph{$n$-body modelling dataset}

We create an $n$-body modelling dataset, in which the task is to predict the final positions of a number of objects that interact under Newtonian gravity given their initial positions, velocities, and velocities.
The dataset is created like the $n$-body dataset described in \citet{gatr}, with one exception: rather than a single cluster of bodies, we create a variable number of clusters, each with a variable number of bodies, such that the total number of bodies in each sample is 16. This makes the problem more challenging. Each cluster is generated as described in \citet{gatr}, and the clusters have locations and overall velocities relative to each other sampled from Gaussian distributions.

\paragraph{Arterial wall-shear-stress dataset}

We use the dataset of human arteries with computed wall shear stress by \citet{suk2022mesh}. We use the single-artery version and focus on the non-canonicalized version with randomly rotated arteries. There are 1600 training meshes, 200 validation meshes, and 200 evaluation meshes, each with around 7000 nodes.

\paragraph{Models and training}

Our GATr variants are discussed in the main paper. We mostly follow the choices used in \citet{gatr}, except for the choice of algebra, attention, and normalization layers. For the linear maps, we evaluated two initialization methods: initialize all basis maps with a Kaiming-like scheme, or initialize the linear maps to be the identity on the algebra, and Kaiming-like in the channels. For iP-GATr and P-GATr, we found that the former worked best, for C-GATr we found the latter to work best and for E-GATr we found no difference.

We choose model and training hyperparameters as in \citet{gatr}, except that for the $n$-body experiments, we use wider and deeper architectures with 20 transformer blocks, 32 multivector channels, and 128 scalar channels.

\paragraph{Baselines}

For the $n$-body modelling experiment, we run Transformer, $SE(3)$-Transformer, and SEGNN experiments, with hyperparameters as discussed in \citet{gatr}.

For the artery experiments, baseline results are taken from \citet{gatr} and \citet{suk2022mesh}.

\end{document}


%

%

\onecolumn
\aistatstitle{Instructions for Paper Submissions to AISTATS 2024: \\
Supplementary Materials}

\section{FORMATTING INSTRUCTIONS}

To prepare a supplementary pdf file, we ask the authors to use \texttt{aistats2024.sty} as a style file and to follow the same formatting instructions as in the main paper.
The only difference is that the supplementary material must be in a \emph{single-column} format.
You can use \texttt{supplement.tex} in our starter pack as a starting point, or append the supplementary content to the main paper and split the final PDF into two separate files.

Note that reviewers are under no obligation to examine your supplementary material.

\section{MISSING PROOFS}

The supplementary materials may contain detailed proofs of the results that are missing in the main paper.

\subsection{Proof of Lemma 3}

\textit{In this section, we present the detailed proof of Lemma 3 and then [ ... ]}

\section{ADDITIONAL EXPERIMENTS}

If you have additional experimental results, you may include them in the supplementary materials.

\subsection{The Effect of Regularization Parameter}

\textit{Our algorithm depends on the regularization parameter $\lambda$. Figure 1 below illustrates the effect of this parameter on the performance of our algorithm. As we can see, [ ... ]}

\vfill


\onecolumn

\aistatstitle{Euclidean, Projective, Conformal: \\ Choosing a Geometric Algebra for Equivariant Transformers\\Supplementary material}



\section{CONSTRUCTING GENERIC MULTILINEAR MAPS}
\label{app:multilinear}
\begin{prop}
\label{thm:app-non-equi-multilinear}
    Let $l \ge 1$.
    \begin{enumerate}
        \item[(1)] If and only if the inner product of $\R^{p,q,r}$ is non-degenerate ($r=0$), any multilinear map $\alg(p,q,r)^l \to \alg(p,q,r)$ can be constructed from addition, geometric products, grade projections and constant multivectors.
    
        \item[(2)] Furthermore, any multilinear map $\alg(p,0,1)^l \to \alg(p,0,1)$ can be constructed from addition, geometric products, the join bilinear, grade projections and constant multivectors.
    \end{enumerate}
\end{prop}
\begin{proof}
    \emph{Proof of (1), ``$\Rightarrow$'':}
    First, let $r=0$. Then let $e_i$ be an orthogonal basis of $\R^{p,q,0}$ where each $e_i$ squares to $\pm 1$. This gives a basis $e_{\mathbf i}$, with multi-index $\mathbf i \in 2^{p+q}$, of the algebra $\alg(p,q,0)$. This basis is also orthogonal and each element $e_{i_1i_2...i_k}$ squares to $\langle e_{i_1i_2...i_k}, e_{i_1i_2...i_k} \rangle = \langle e_{i_1i_2...i_k} \widetilde{e_{i_1i_2...i_k}} \rangle_0 =e_{i_1}e_{i_2}...e_{i_k}e_{i_k}...e_{i_2}e_{i_1}= \prod_k \langle e_k, e_k \rangle = \pm 1$.
    
    Now, let $\phi: \alg(p,q,0) \to \alg(p,q,0)$ be any linear map. For each basis element of the algebra, let $x_{\mathbf i}:=\phi(e_\mathbf i) / \langle e_\mathbf i, e_\mathbf i \rangle$. Then $\phi$ can then be written as:
    \[
        \psi(w) = \sum_{\mathbf i \in 2^{p+q+r}} x_{\mathbf i} \langle w \, \widetilde{e_\mathbf i} \rangle_0
    \]
    It is easy to see that for any basis element $e_\mathbf i$, $\phi(e_\mathbf i)=\psi(e_\mathbf i)$, hence the linear maps coincide.

    For a multilinear map $\phi:\alg(p,q,0)^l \to \alg(p,q,0)$, a similar construction can be made:
    \begin{align*}
        \phi(w_1, ..., w_l) = \sum_{\mathbf i_1 \in 2^{p+q+r}} ...\sum_{\mathbf i_l \in 2^{p+q+r}} x_{\mathbf i_1, ..., \mathbf i_l} \langle w_1 \, \widetilde{e_{\mathbf i_1}} \rangle_0 ... \langle w_l \, \widetilde{e_{\mathbf i_l}} \rangle_0 \\
        \text{with}
        \quad
        x_{\mathbf i_1, ..., \mathbf i_l}=\frac{\phi(e_{\mathbf i_1}, ..., e_{\mathbf i_l})}{\langle e_{\mathbf i_1}, e_{\mathbf i_1}\rangle ... \langle e_{\mathbf i_l}, e_{\mathbf i_l}\rangle}
    \end{align*}

    \emph{Proof of (1), ``$\Leftarrow$'':}
    Let $r > 0$. Let $e_0 \in \R^{p,q,r}$ denote a nonzero radical vector, meaning that for all $x \in \R^{p,q,r}$, $\langle e_0, x \rangle =0$. Consider the multilinear map $\phi:\alg(p,q,r)^l \to \alg(p,q,r)$ sending input $(e_0, ..., e_0) \mapsto 1$ and all other inputs to 0. This map can not be constructed from within the algebra. To see this, consider any nonzero $k$-vector $e_0 \wedge y$ for a $(k-1)$-vector $y$. The only way of mapping $e_0 \wedge y$ to a scalar involves multipication with $e_0$, which results in a zero scalar component.

    \emph{Proof of (2):}
    Now consider the projective algebra $\alg(p, 0, 1)$ equipped with the join $\vee$, a bilinear operation $\alg(p,0,1) \times \alg(p, 0, 1) \to \alg(p, 0, 1)$ mapping algebra basis elements $e_{\mathbf i} \vee e_{\mathbf j}$ to $\pm e_{\mathbf k}$, where $\mathbf k$ contains all indices that occur in both $\mathbf i$ and $\mathbf j$, as long as all $p+1$ indices are present as at least once in either $\mathbf i$ or $\mathbf j$. Otherwise, $e_{\mathbf i} \vee e_{\mathbf j}=0$. See \cite{pga-tour} for details. In particular, the join satisfies $e_{012...p} \vee 1 = 1$.
    
    With the join in hand, any linear map $\phi : \alg(p, 0, 1) \to \alg(p, 0, 1)$ can be written as:
    \[
    \psi(w) = \sum_{\mathbf i \in 2^{p+1}} x_{\mathbf i} \langle (w \wedge e_{\setminus \mathbf i}) \vee 1 \rangle_0
    \]
    where $x_{\mathbf i}:=\phi(e_\mathbf i)$ and $e_{\setminus \mathbf i}$ contains all indices absent in $\mathbf i$, in an order such that $e_{\mathbf i} \wedge e_{\setminus \mathbf i}=e_{012...p}$. For any basis element $e_\mathbf j$, $\langle (e_\mathbf j \wedge e_{\setminus \mathbf i}) \vee 1 \rangle_0=1$ if $\mathbf j = \mathbf i$ and 0 otherwise, because if $\mathbf j$ lacks any index in $\mathbf i$, the join yields a zero, and if it $\mathbf j$ has any indices not in $\mathbf i$, the join results in a non-scalar, which becomes zero with the grade projection. Therefore, $\psi(e_\mathbf i)=\phi(e_\mathbf i)$ for all basis elements $e_\mathbf i$, and the linear maps are equal. As before, this construction easily generalizes to multi-linear maps.
\end{proof}

\section{NUMERICALLY COMPUTING EQUIVARIANT MULTIlinear maps}
\label{app:numerical}
\label{app:equi-lin}
\subsection{Lie group equivariance constraint solving via Lie algebras}
First, let's discuss in generality how to solve group equivariance constraints via the Lie algebra, akin to \cite{Finzi2021-vj}.

Let $G$ be a Lie group, $\mathfrak g$ be its algebra.
Let $\exp: \mathfrak g \to G$ be the Lie group exponential map.

A group representation $(\rho, V)$ induces a Lie algebra representation:
$d\rho : \mathfrak{g} \to \mathfrak{gl}(V)$, linearly sending $X \in \mathfrak g$ to a linear map $d\rho(X): V \to V$, satisfying $\rho(\exp(X))=\exp(d\rho(X))$, where the latter $\exp$ is the matrix exponential.

Given a real Lie algebra representation $(\rho, V)$, there is a dual representation $(\rho^*, V^*)$ satisfying $\rho^*(g)=\rho(g^{-1})^T$. It is easy to see that $d\rho^*(X)=-d\rho(X)^T$.

For two group representations $(\rho_1, V_1)$ and $(\rho_2, V_2)$, there is a tensor representation $(\rho_1 \otimes \rho_2, V_1 \otimes V_2)$ with Lie algebra representation $d(\rho_1 \otimes \rho_2) = 1_{V_1} \otimes d\rho_2 + d\rho_1 \otimes 1_{V_2}$.

$(\rho_1, V_1)$ and $(\rho_2, V_2)$, a linear map $\phi: V_1 \to V_2$ is equivariant if and only if $\phi$ is invariant to the group representation $\rho_2 \otimes \rho_1^*$, when flattening $\vecop(\phi)\in V_2 \otimes V_1^*$: for all $g \in G$,
\[
\rho_2(g) \phi = \phi \rho_1(g) \iff (\rho_2 \otimes \rho_1^*)(g)\vecop(\phi) = \vecop(\phi)
\]

Any Lie group $G$ is equal to a semi-direct product $G^0 \rtimes D$, for $G^0 \subseteq G$ the subgroup connected to the identity and $D$ a discrete group. Let $B$ be a set of basis elements of the Lie algebra. Then $\exp(\mathrm{span}(B)) = G^0$.

First, consider a connected Lie group $G^0$, and a basis $B$ of the Lie algebra, and a representation $(\rho, V)$. Then
\begin{align*}
&&\forall g \in G^0,\, \rho(g)v=v \\
\iff && \forall X \in \mathfrak g, \, \rho(\exp(X))v=v \\
\iff && \forall X \in \mathfrak g, \, \exp(d\rho(X))v=v \\
\iff && \forall X \in \mathfrak g, \, d\rho(X)v=0 \\
\iff && \forall X \in B, \, d\rho(X)v=0 \\
\end{align*}
where in for the final step, we note that $d\rho$ is linear, so linearly dependent algebra vectors generate linearly dependent csontraints, and just constraining by a basis of the algebra suffices.

To test invariance to a non-connected Lie group, we need to additionally constrain for the discrete group $D$, generated by $D' \subset D$, leading to:
\[
\forall g \in G\, \rho(g)v = v \iff \begin{cases}
    d\rho(X)v=0 &\forall X \in B \\
    (\rho(g)-1_V)v=0 &\forall g \in D' \\
\end{cases}
\]

If $V$ is $d$-dimensional There are thus $|B|+|D'|$ $d \times d$ matrices and $v$ needs to be in the null space of each of these, or equivalently in the null space of the concatenated $((|B|+|D'|)d) \times d$ matrix. This can be done numerically via e.g. $\texttt{scipy.linalg.nullspace}$. When $\rho$ can be decomposed into subrepresentations $(\rho_a \oplus \rho_b, V_a \oplus V_b)$, the invariant vectors can be found separately, making computing the null space more efficient.

Combining the framing of equivariance as invariance, and finding invariant vectors via a null space, we can find the linear equivariant maps $\phi: V_1 \to V_2$ by finding the nullspace of:
\[
\forall g \in G\, \rho_2(g)\phi = \phi \rho_1(g) \iff \begin{cases}
    (d\rho_2\otimes 1_{V_1^*}-1_{V_2} \otimes d\rho_1^T)(X)\vecop{\phi}=0 &\forall X \in B \\
    ((\rho_2\otimes \rho_1^*)(g)-1_V)\vecop{\phi}=0 &\forall g \in D' \\
\end{cases}
\]

To find equivariant multilinear maps $\phi: V_{i_1} \otimes V_{i_2} \otimes ... \otimes V_{i_l} \to V$, we simply set $\rho_1 = \rho_{i_1} \otimes \rho_{i_2} \otimes ... \otimes \rho_{i_l}$, with $d\rho_1 = d\rho_{i_1} \otimes 1_{V_{i_2}} \otimes ... + 1_{V_{i_1}} \otimes d\rho_{i_2} \otimes ... + ...$.

\subsection{GA equivariance solving of linear maps}
For the GA, we'll consider $V=\alg(p,q,r)$ and $\rho(u)(x)=u \widehat{x}u^{-1}$. Define $d\rho(X)(v)=Xv - vX$. Any GA has the exponential map endomorphism, defined through the Taylor series:
\[
\exp : \alg(p,q,r) \to \alg(p,q,r) : x \mapsto 1 + x + \frac{1}{2!}x^2 + \frac{1}{3!}x^3 + ... 
\]

\paragraph{EGA}
Now, for the EGA, the bivectors $\alg(3,0,0)_2$ are the Lie algebra $\spinalg(3,0,0)$ of the connected Lie group $\Spin(3,0,0)$ of even number of reflections. The Lie group exponential map is the GA exponential map. The entire Pin group decomposes as $\Pin(3,0,0) = \Spin(3,0,0)\rtimes \{1, e_1\}$.
The bivectors have a basis $\spinalg(3,0,0)=\alg(3,0,0)_2=\spanop(e_{12}, e_{23}, e_{13})$. Therefore, a linear map $\phi: \alg(3,0,0) \to \alg(3,0,0)$ is equivariant to $\Pin(3,0,0)$, and hence to $\orth(3)$, which it doubly covers, and $\euc(3)$ with trivial action under translation, if and only if:
\begin{align*}
\begin{cases}
    (d\rho\otimes 1-1 \otimes d\rho^T)(X)\vecop{\phi}=0 &\forall X \in \{e_{12}, e_{23}, e_{13}\} \\
    ((\rho\otimes \rho^*)(e_1)-1_V)\vecop{\phi}=0 &
\end{cases}
\end{align*}

Studying the nullspace, we find that all equivariant linear maps can be written as linear combinations of grade projections, giving 4 independent maps:
\[
\phi : \alg(3,0,0) \to \alg(3, 0, 0) : x \mapsto \sum_{k=0}^3 \alpha_k \langle x \rangle_k
\]

\paragraph{PGA}
For the PGA, similarly, $\Pin(3,0,1)$ doubly covers $\euc(3)$. The group $\Spin(3,0,1)$ is its connected subgroup, whose algebra are the bivectors, and the Pin group decomposes as the Spin group and a mirroring. A linear map $\phi: \alg(3,0,1) \to \alg(3,0,1)$ is therefore equivariant to $\Pin(3,0,1)$, and hence $\euc(3)$, if and only if:
\begin{align*}
\begin{cases}
    (d\rho\otimes 1-1 \otimes d\rho^T)(X)\vecop{\phi}=0 &\forall X \in \{e_{12}, e_{23}, e_{13}, e_{01}, e_{02}, e_{03}\} \\
    ((\rho\otimes \rho^*)(e_1)-1_V)\vecop{\phi}=0 &
\end{cases}
\end{align*}

Studying the nullspace, we find that all equivariant linear maps can be written as linear combinations of grade projections and multiplications with $e_0$, leading to 9 independent maps:
\[
\phi : \alg(3,0,1) \to \alg(3, 0, 1) : x \mapsto \sum_{k=0}^4 \alpha_k \langle x \rangle_k + \sum_{k=1}^4 \beta_k \langle e_0 x \rangle_k
\]

This result is in accordance with what was shown analytically in \cite{gatr}.

\paragraph{CGA}
Let $\iota: \alg(3,0,1) \to \alg(4,0,1)$ be the algebra homomorphism with $\iota(e_i)=e_i, \iota(e_0)=\infty$.
For the CGA, $E(3)$ is doubly covered by the subgroup $\iota(\Pin(3,0,1))$ of $\Pin(4,1,0)$, hence a linear map $\phi: \alg(4,1,0) \to \alg(4,1,0)$ is equivariant to $\iota(\Pin(3,0,1))$, and hence $\euc(3)$, if and only if:
\begin{align*}
\begin{cases}
    (d\rho\otimes 1-1 \otimes d\rho^T)(X)\vecop{\phi}=0 &\forall X \in \{e_{12}, e_{23}, e_{13}, e_{\infty 1}, e_{\infty 2}, e_{\infty 3}\} \\
    ((\rho\otimes \rho^*)(e_1)-1_V)\vecop{\phi}=0 &
\end{cases}
\end{align*}

Studying the nullspace, we find that all equivariant linear maps can be written as linear combinations of grade projections and multiplications with $\infty$, giving 20 independent maps in total:
\begin{align*}
\phi : \alg(4,1,0) \to &\alg(4, 1, 0) \\
x \mapsto &\sum_{k=0}^5 \alpha_k \langle x \rangle_k \\
+ &\sum_{k=1}^5 \beta_k \langle \infty \langle x \rangle_k \rangle_{k-1} \\
+ &\sum_{k=0}^4 \gamma_k \langle \infty \langle x \rangle_k \rangle_{k+1} \\
+ &\sum_{k=1}^4 \delta_k \infty\langle \infty \langle x \rangle_k \rangle_{k-1}
\end{align*}

\paragraph{$\seuc(3)$ equivariance}
To consider $\seuc(3)$ equivariance, we just have to be equivariant tot the connected part rotational of the Lie group, so remove the mirror constraint in the above equations. For the EGA, PGA and CGA, we find numerically that the $\seuc(3)$-equivariant maps are the same as the $\euc(3)$-equivariant linear maps, but possibly combined with multiplication with the pseudoscalar: $e_{123}$ for the EGA, $e_{0123}$ for the PGA and $e_{123}\wedge o \wedge \infty$ for the CGA. This is because the pseudoscalar is an invariant, up to a sign flip due to mirroring, thus $\seuc(3)$ invariant.

\subsection{Multilinear equivariant map solving}
To find multilinear equivariant maps efficiently, we found it necessary to separate out the grades. For any geometric algebra, the $\Pin(p,q,r)$ representation decomposes into sum of a representation $(\rho_k, \alg(p,q,r)^k)$ of $k$-vectors, for each grade $k$. Then we use the above procedure to find the equivariant multilinear maps $\phi: \alg(p,q,r)^{i_1} \otimes \alg(p,q,r)^{i_2} \otimes ... \otimes \alg(p,q,r)^{i_l} \to \alg(p,q,r)^o$, taking as inputs an $i_1$-vector, and $i_2$-vector, ..., and an $i_l$-vector and outputting an $o$-vector.

\subsection{Numerically testing expressivity}
In the above subsections, we show how one can compute all equivariant multilinear maps for a given algebra. In the main paper, we stated the following conjecture:
\begin{conjecture}
    Let $l \ge 2$.
    For the EGA and the CGA, and not for the PGA, any $\euc(3)$-equivariant (resp. $\seuc(3)$-equivariant) multilinear map $\alg(p,q,r)^l \to \alg(p,q,r)$ can be constructed out of a combination of the geometric product and $\euc(3)$-equivariant (resp. $\seuc(3)$-equivariant) linear maps. For PGA, any $\seuc(3)$-equivariant multilinear map can be expressed using equivariant linear maps, the geometric product and the join.
\end{conjecture}

To test this, we explicitly construct all linear maps via the algebra. Let $\phi^\alpha_{\mu \nu}$ be a basis for the linear equivariant maps of an algebra, so that for each $\alpha$, $y_a=\sum_b \phi^\alpha_{a b}x_b$ is an equivariant linear map, where roman indices enumerate multivector indices. Also, let $\Phi^\beta_{abc}$ be a basis for the bilinears in the algebra, so that for each $\beta$, $z_a = \sum_{bc}\Phi^\beta_{abc}x_b y_c$ is a bilinear. For most algebras, we'll just consider the geometric product, but for the PGA, we can also consider the join, which is only $\seuc(3)$-equivariant \cite[Prop 7]{gatr}. Then, for example, for $l=2$, all bilinear maps constructable for two inputs $x^1, x^2$ from the linears and bilinears are:
\[
\sum_{bc}\Omega^{\sigma \alpha\beta\gamma\delta}_{abc}x^1_bx^2_c= \sum_{bcdef}\phi^\alpha_{ab}\Phi^\beta_{bcd} \left( \phi^\gamma_{ce}x^{\sigma_1}_e \right) \left(\phi^\delta_{df}x^{\sigma_2}_f\right)
\]
where $\sigma \in S_2$ is a permutation over the two inputs. This approach can be recursively applied to construct any multilinear map from the bilinears and linears. As the algebra is not commutative, we need to take care to consider all permutations of the inputs. For computational efficiency to soften the growth in the number of Greek basis indices, during the reduction for multilinear maps, we apply a singluar value decomposition of the basis of maps, re-express the basis in the smallest number of basis maps.

With this strategy, we were able to verify the above conjecture for $2 \le l \le 4$.









    








    






\section{EXPERIMENT DETAILS}
\label{app:experiment}

\paragraph{$n$-body modelling dataset}

We create an $n$-body modelling dataset, in which the task is to predict the final positions of a number of objects that interact under Newtonian gravity given their initial positions, velocities, and velocities.
The dataset is created like the $n$-body dataset described in \citet{gatr}, with one exception: rather than a single cluster of bodies, we create a variable number of clusters, each with a variable number of bodies, such that the total number of bodies in each sample is 16. This makes the problem more challenging. Each cluster is generated as described in \citet{gatr}, and the clusters have locations and overall velocities relative to each other sampled from Gaussian distributions.

\paragraph{Arterial wall-shear-stress dataset}

We use the dataset of human arteries with computed wall shear stress by \citet{suk2022mesh}. We use the single-artery version and focus on the non-canonicalized version with randomly rotated arteries. There are 1600 training meshes, 200 validation meshes, and 200 evaluation meshes, each with around 7000 nodes.

\paragraph{Models and training}

Our GATr variants are discussed in the main paper. We mostly follow the choices used in \citet{gatr}, except for the choice of algebra, attention, and normalization layers. For the linear maps, we evaluated two initialization methods: initialize all basis maps with a Kaiming-like scheme, or initialize the linear maps to be the identity on the algebra, and Kaiming-like in the channels. For iP-GATr and P-GATr, we found that the former worked best, for C-GATr we found the latter to work best and for E-GATr we found no difference.

We choose model and training hyperparameters as in \citet{gatr}, except that for the $n$-body experiments, we use wider and deeper architectures with 20 transformer blocks, 32 multivector channels, and 128 scalar channels.

\paragraph{Baselines}

For the $n$-body modelling experiment, we run Transformer, $SE(3)$-Transformer, and SEGNN experiments, with hyperparameters as discussed in \citet{gatr}.

For the artery experiments, baseline results are taken from \citet{gatr} and \citet{suk2022mesh}.


\bibliography{refs}